\newcommand{\mycomment}[1]{}
\newcommand{\eg}{e.g., }
\definecolor{ckeyword}{HTML}{7F0055}
\definecolor{ccomment}{HTML}{3F7F5F}
\definecolor{cstring}{HTML}{2A0099}
\lstdefinestyle{numbers}{
	numbers=left,
	%
	framexleftmargin=20pt,
	%
	numberstyle=\tiny,
	%
	firstnumber=auto,
	%
	numbersep=1em,
	%
	xleftmargin=2em
}
\lstdefinestyle{layout}{
	frame=none,
	%
	captionpos=b,
}
\lstdefinestyle{comment-style}{
	morecomment=[l]//,
	%
	morecomment=[s]{/*}{*/},
	%
	commentstyle={\color{ccomment}\itshape},
}
\lstdefinestyle{string-style}{
	%
	morestring=[b]",%
	%
	morestring=[b]',%
	%
	stringstyle={\color{cstring}},
	%
	showstringspaces=false,%
}
\lstdefinestyle{keyword-style}{
	%
	keywordstyle={\ttfamily\bfseries},
	%
	morekeywords={
		function,
		constructor,
		int,
		bool,
		return,
		returns,
		uint
	},
	%
	morekeywords = [2]{},
	keywordstyle = [2]{\text},
	%
	%
	sensitive=true,
}
\lstdefinestyle{input-encoding}{
	inputencoding=utf8,
	%
	%
	extendedchars=true,
	%
	%
	literate=
	{ℝ}{$\reals$}1%
	{→}{$\rightarrow$}1%
	{α}{$\alpha$}1%
	{β}{$\beta$}1%
	{λ}{$\lambda$}1%
	{θ}{$\theta$}1%
	{ϕ}{$\phi$}1%
}
\lstdefinestyle{escaping}{
	%
	moredelim={**[is][\color{blue}]{\%}{\%}},
	%
	%
	escapechar=|,
	%
	%
	mathescape=true
}
\lstdefinestyle{default-style}{
	%
	basicstyle=\fontencoding{T1}\ttfamily\footnotesize,
	style=numbers,
	style=layout,
	style=comment-style,
	style=string-style,
	style=keyword-style,
	style=input-encoding,
	style=escaping,
	%
	%
	%
	tabsize=2,
	%
	upquote=true
}
\lstdefinelanguage{BASIC}{
	language=C++,
	style=default-style
}[keywords,comments,strings]%
\def\1{\bm{1}}
\DeclareMathAlphabet{\mathsfit}{\encodingdefault}{\sfdefault}{m}{sl}
\SetMathAlphabet{\mathsfit}{bold}{\encodingdefault}{\sfdefault}{bx}{n}
\newcommand{\crefrangeconjunction}{--}
\crefname{section}{Sec.}{Sections}
\crefname{listing}{Lst.}{listings}
\crefname{line}{Lin.}{Lin.}
\crefname{appendix}{App.}{App.}
\newcommand{\app}[1]{%
	\ifbool{includeappendix}{\cref{#1}}{the appendix}%
}
\newcommand{\App}[1]{%
	\ifbool{includeappendix}{\cref{#1}}{The appendix}%
}
\newcommand\blfootnote[1]{%
	\begin{NoHyper}%
	\begingroup
	\renewcommand\thefootnote{}\footnote{#1}%
	\addtocounter{footnote}{-1}%
	\endgroup
	\end{NoHyper}%
}
\title{Human-Guided Fair Classification for \protect \\Natural Language Processing}
\author{Florian E.~Dorner\textsuperscript{\rm 1,2},
Momchil Peychev\textsuperscript{\rm 1},
Nikola Konstantinov\textsuperscript{\rm 1},
Naman Goel\textsuperscript{\rm 3}, \\
\textbf{Elliott Ash\textsuperscript{\rm 1},
Martin Vechev\textsuperscript{\rm 1}} \\
\textsuperscript{\rm 1}ETH Zurich,
\textsuperscript{\rm 2}MPI for Intelligent Systems, Tübingen,
\textsuperscript{\rm 3}University of Oxford\\
Correspondence to: \texttt{florian.dorner@tuebingen.mpg.de}
}
\begin{document}

\maketitle


\begin{abstract}
%

Text classifiers have promising applications in high-stake tasks such as resume screening and content moderation.  These classifiers must be fair and avoid discriminatory decisions by being invariant to perturbations of sensitive attributes such as gender or ethnicity. However, there is a gap between human intuition about these perturbations and the formal similarity specifications capturing them. While existing research has started to address this gap, current methods are based on hardcoded word replacements, resulting in specifications with limited expressivity or ones that fail to fully align with human intuition (e.g., in cases of asymmetric counterfactuals). This work proposes novel methods for bridging this gap by discovering expressive and intuitive individual fairness specifications. We show how to leverage unsupervised style transfer and GPT-3's zero-shot capabilities to automatically generate expressive candidate pairs of semantically similar sentences that differ along sensitive attributes. We then validate the generated pairs via an extensive crowdsourcing study, which confirms that a lot of these pairs align with human intuition about fairness in the context of toxicity classification. Finally, we show how limited amounts of human feedback can be leveraged to learn a similarity specification that can be used to train downstream fairness-aware models. 

\end{abstract}

\section{Introduction}

With the rise of pretrained 
large language models~\citep{sun2019fine}, text classifiers can now be
employed in tasks related to automated hiring~\citep{bhatia2019end}, content
moderation~\citep{rieder2021fabrics} and social science research~\citep{widmer2022media}.
They are also part of machine learning pipelines for unsupervised style
transfer~\citep{reid2021lewis}
or reducing the toxicity of language model outputs~\citep{welbl2021challenges}.
However, text classifiers have been shown to often exhibit bias based on sensitive attributes such as gender~\citep{de2019bias} or demographics~\citep{garg2019counterfactual},
even for tasks in which these dimensions should be irrelevant. This can lead to unfair
and discriminatory decisions, distort analyses based on these classifiers, or
propagate undesirable demographic stereotypes to downstream applications.
The intuition that certain demographic indicators should not influence decisions
can be formalized in terms of the concept of
\textit{individual fairness}~\citep{dwork2012fairness},
which posits that \textit{similar inputs}
should be \textit{treated similarly} by machine learning
systems. While in a classification setting similar treatment for two inputs
can naturally be defined in terms of both inputs being labeled the same, the notion of input similarity should capture the intuition that certain input characteristics should not influence model decisions.

\paragraph{Key challenge: generating valid, intuitive and diverse fairness constraints}

A key challenge when applying the individual fairness framework is defining the similarity notion $\phi$. Indeed, the definition is often contentious, as fairness is a subjective concept: what counts as a valid demographic indicator,
as opposed to a problematic stereotype? Counterfactual definitions of
similarity~\citep{kusner2017counterfactual} offer a principled solution, but they shift the burden towards the underlying causal
model, whose definition can often be similarly contentious. While many other definitions have been proposed, it is widely recognized that the similarity of inputs
can often be highly task dependent ~\citep{dwork2012fairness, barocas2019fairness}, \eg two biographies that are identical except
for indicators of gender may be considered similar in a professional context, but not
in the context of online dating.

In the context of text classification, most existing works have cast similarity in terms
of word
replacement~\citep{dixon2018measuring,garg2019counterfactual,yurochkin2020sensei,liang2020towards}.
Given a sentence $s$, a similar sentence $s'$ is generated by replacing each word in $s$,
that belongs to a list of words $A_{j}$ indicative of a demographic group $j$,
by a word from list $A_{j'}$, indicative of another demographic group $j' \neq j$.
This approach has
several limitations: (i) it relies on having exhaustively curated word lists $A_{j}$ of
sensitive terms, (ii) perturbations that cannot be represented by replacing single sensitive terms are not covered, and (iii) many terms are only
indicative of demographic groups in specific contexts, hence directly replacing them
with other terms will not always result in a similar pair $(s, s')$ according to human intuition.
Indeed, word replacement rules can often produce sentence pairs that only differ in an axis not relevant to fairness (\eg by replacing ``white house''
with ``black house''). In addition, they can generate so-called
\textit{asymmetric  counterfactuals}~\citep{garg2019counterfactual}:
sentence pairs $(s,s')$ that look similar but clearly do not warrant similar treatment. For example,
in the context of toxicity classification, the text
``I don't like this movie. It is so old'' may not be considered toxic while
``I don't like this movie. It is so gay'' clearly is.

\paragraph{This work: generating fairness specifications for text classification} The central
challenge we consider in this work is how to generate a diverse set of input pairs that aligns with human intuition about which inputs should be treated similarly in the context of a fixed text classification task. These pairs then induce fairness constraints that collectively define an implict fairness specification on a downstream classifier, as individual fairness postulates that they should be classified in the same way.

We address this challenge via a three-stage pipeline, summarized in \cref{fig:Main}.
First, we start from a training dataset $D$ for the text classification task under consideration and generate a set $C^w$ of
candidate pairs $(s,s')$ by applying word replacement to sentences $s\in D$.
Second, to improve diversity and expand on word replacement rules,
we extend $C^w$ to a larger set of pairs $C^e$ by borrowing
unsupervised style transfer ideas. We change markers of demographic groups, \eg
``women'', ``black people'' or ``Christians'', in sentences $s\in D$ by replacing  the style classifier used by modern unsupervised style
transfer methods ~\citep{reid2021lewis, lee2020stable} with a classifier trained to identify mentions of demographic groups. In addition, we add pairs from GPT-3~\citep{brown2020language}, prompted to change markers of demographic groups for sentences in $D$ in a zero-shot
fashion.
\begin{figure}[t]
	\centering
	\includegraphics[width=\textwidth]{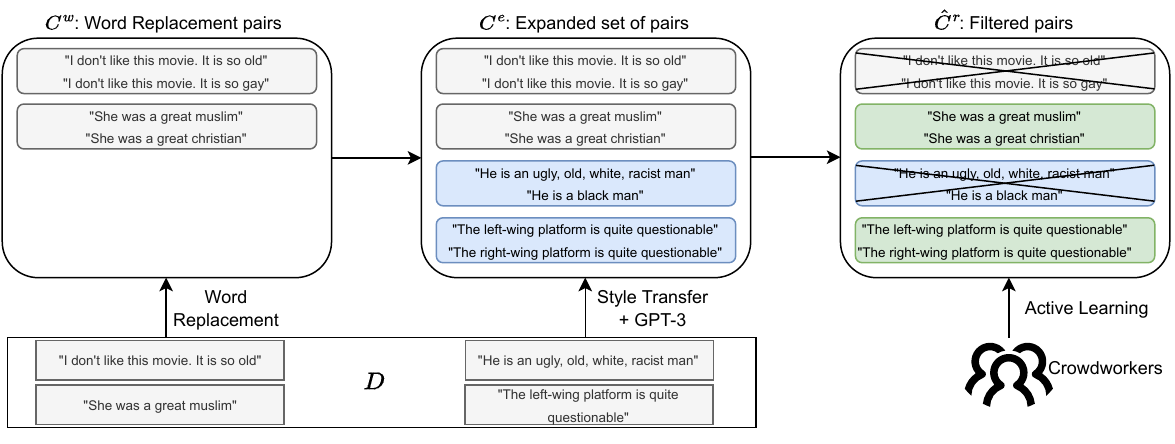}
	\caption{
		Workflow overview.
		We begin by generating sentence pairs using word replacement, and then add
		pairs of sentences leveraging style transfer and GPT-3. Then, we use active learning
		and crowdworker judgments to identify pairs that deserve similar treatment according to human intuition.
	}
	\label{fig:Main}
\end{figure}
Third, to identify which of the generated pairs align with human intuition about fairness in the context of the considered classification task, we design a crowdsourcing experiment in which workers are presented with candidate pairs and indicate if the pairs should be treated similarly for the considered task or not. Since obtaining human feedback is expensive, we label a small subset of the generated pool and train a BERT-based \citep{devlin2018bert}
classifier $\hat{\varphi}$ to recognize pairs that should be treated similarly, yielding a final set of filtered pairs $\hat{C}^{r} \subseteq C^e$. To further reduce labeling costs, we use
active learning similar to~\citep{griesshaber2020fine} to decide which pairs to label. We also demonstrate that the final set of constraints $\hat{C}^{r}$ can be used for training fairness-aware downstream classifiers, by adopting the Counterfactual Logit Pairing (CLP) regularizer of \citep{garg2019counterfactual}.

While our pipeline can in principle be used in the context of most text classification tasks, we instantiate it in the context of toxicity classification. Our experimental results, based on a large dataset for online content moderation, show that in this context our pipeline effectively generates a set of candidate pairs that covers more diverse perturbations than existing word replacement based approaches and successfully leverages human feedback to verify and filter these candidate pairs.


\paragraph{Main contributions} We make the following contributions:
\begin{itemize}[noitemsep,topsep=0pt]
  \item We introduce a method for generating datasets of diverse candidate pairs for individual fairness specifications. Towards that, we leverage GPT-3 and unsupervised style transfer to modify demographic attributes mentioned in sentences.
  \item We show that human feedback can be used for training a classifier that automatically identifies pairs that align with human fairness intuitions for a considered downstream task.
  \item We instantiate our framework in the context of toxicity classification. We experimentally show that the proposed pairs cover more diverse perturbations than  word replacement, that crowdworkers agree with more than $75\%$ of proposed pairs and that our learned approximate specification can effectively be used to train fairness-aware downstream classifiers.
\end{itemize}

\section{Related Work}
\paragraph{Bias in NLP}
Early work on bias in Natural Language Processing has focused on unwanted correlations
between the word embeddings of identifiers for protected demographic groups and
unrelated categories such as
occupations~\citep{bolukbasi2016man,caliskan2017semantics}. More recently,
generative language models have been found to harbor stereotypical
biases~\citep{liang2020towards,nadeem2020stereoset,vig2020investigating,smith2022m}.
Specific to text classification, identity terms such as ``gay'' and explicit indicators
of gender have been shown to significantly impact the outputs of classifiers trained to identify
toxic comments~\citep{dixon2018measuring} or to predict a
person's occupation from their biography~\citep{de2019bias}.
\citet{olteanu2017limits} demonstrate that
human perceptions of the quality of a toxicity classifier can depend on the precise
nature of errors made by the classifier, as well as the annotators' previous
experiences with hate speech. Similarly, \citet{blodgett2020language} recommend
authors to explictly consider why, how and to whom the biases they identify
are harmful.

\paragraph{Language models for data augmentation}
\citet{perez2022red} use a language model to automatically generate test cases for another language model and
\citet{ross2021tailor} automatically create contrast
sets~\citep{gardner2020evaluating} with a language model perturbing sentences
based on control codes.
\citet{rios2020fuzze} use style transfer to change the dialect of African-American
Vernacular English tweets to Standard American English in order to
evaluate the sensitivity to dialect of offensive language detectors, but do not extend
style transfer to mentions of demographic groups.
\citet{hartvigsen2022toxigen} use language models to
generate a balanced dataset of benign and toxic comments about minority groups and
demonstrate that finetuning a toxicity classifier on this dataset can
substantially limit its reliance on spurious correlations between identity terms
and toxicity. However, their dataset is non-parallel, hence it cannot be used for
evaluating individual fairness. Meanwhile, \citet{qian2022perturbation} train a
perturber model to imitate human rewrites $s'$ of comments $s$ that aim to modify
mentions of demographic groups, and demonstrate that finetuning language models
on the modified comments reduces demographic biases. Although this approach creates
parallel data, it is limited by its reliance on large amounts of expensive
human rewrites, which is likely why the authors only use it for perturbations
along given demographic axes such as gender. In contrast, we allow for
perturbations across axes and only require human annotations rather than rewrites.

\paragraph{Learning fairness notions from data}
\citet{ilvento2019metric} provides an algorithm to approximate arbitrary individual
fairness metrics for $N$ datapoints in $O(N\log N )$ queries, which can be
practically infeasible.
Meanwhile, \citet{mukherjee2020two} suggest training a classifier to predict binary
fairness judgments on pairs $(s,s')$ in order to learn a fairness metric $\phi$,
but restrict themselves to Mahalanobis distances on top of a feature representation
$\xi(s)$, limiting their expressive power.
In contrast to our work, these works do not validate their learned fairness notions with human feedback.
To that end, \citet{cheng2021soliciting} present an interface to holistically elicit
stakeholders' fairness judgments, whereas~\citet{wang2019empirical} aim to learn a
bilinear fairness metric for tabular data based on clustering human
annotations.
 Another
strain of work aims to directly learn fair classifiers
without an explicit fairness metric: given access to similarity queries,
\citet{jung2019algorithmic} propose an algorithm with generalization bounds
for fairness and accuracy that requires polynomially many queries to cost-sensitive
classification oracle, while other
work~\citep{gillen2018online,bechavod2020metric} focuses on online learning of
individually fair models. Lastly, \citet{lahoti2019operationalizing} use examples
of similar pairs $(s,s')$ to directly learn a representation that aims to ensure
geometric similarity for similar pairs while preserving nearest neighbors in the
input space. This approach is difficult to use for non-tabular data,
in which nearest neighbor relations do not necessarily carry semantic meaning.
In contrast to these works, we are not only interested in training fair classifiers,
but also aim to learn the similarity function which approximates human
intuitions about fairness for the task.

\paragraph{Enforcing fairness constraints}
\citet{garg2019counterfactual} suggest enforcing fairness constraints via censoring terms indicative of demographic groups, and by extending logit pairing \citep{kannan2018adversarial} to
counterfactual logit pairing (CLP): during training, a classifier $f$ with
logits $l$ is regularized by the term $\lambda ||l(s)-l(s')||_{2}$ for similar datapoints $s$ and $s'$.
\citet{yurochkin2019training} and \citet{yurochkin2020sensei} use distributionally
robust optimization and transport-based regularization respectively to train a
toxicity classifier with distributional fairness guarantees for bilinear fairness
metric similar to the ones from~\citep{mukherjee2020two}.
\citep{ruoss2020learning,yeom2020individual,peychev2021latent}
not only enforce, but also certify the adherence to individual
fairness constraints expressed in logical formulas, weighted $L^{p}$ metrics
or similarity sets defined in the latent space of a generative model.
However, except for CLP and censoring, all of these methods require a known similarity metric with
a specific functional form, which is not always available in practice.

\newtheorem{proposition}{Proposition}[section]
\section{Method}
This section presents our end-to-end framework for generating candidate pairs for
individual fairness specifications
for a given text classification task,
identifying candidates that indeed represent fairness constraints for that task and using them for training individually fair downstream classifiers.
In \cref{subsection:Transfer} we expand on existing word
replacement definitions of individual fairness in text classification by
generating further candidate constraints.
Next, in~\cref{subsection:learning} we leverage human feedback to learn an approximate similarity function $\hat{\varphi}$ to identify a set of
relevant constraints $\hat{C}^{r} \subseteq C^e$.
Finally, in~\cref{subsection:downstream} we train a fairness-aware classifier $f$ using CLP on the filtered constraint set
$\hat{C}^{r}$.

\subsection{Expanding fairness constraints}\label{subsection:Transfer}
We expand the word replacement based constraint set from~\citep{garg2019counterfactual} by implementing three different ways to modify markers of demographic groups mentioned in a sentence $s$: an extended word replacement list, unsupervised style transfer, and zero-shot modification using GPT-3.

\paragraph{Word Replacement}\label{para:WR} First, we enrich the word replacement method by using the extensive lists of words associated with different protected demographic groups presented in \citep{smith2022m}. The pool of terms is substantially larger than the $50$ identity terms from \citep{garg2019counterfactual}. We modify markers of group $j$ in a comment $s$ by
replacing all words on the respective list of words associated with group $j$ with words from the list associated with the target group $j'$.

\paragraph{Unsupervised Style Transfer}\label{para:ST}
Second, we use an unsupervised style transfer approach based on prototype editing (see~\citep{jin2022deep} for an extensive review on style transfer) to transform markers of a demographic group $j$ in a sentence
$s$ to markers of another demographic group $j'$, creating a new sentence $s'$.
Prototype editing identifies markers $a$ of a source style $A$ in a text $s$, and substitutes
them by markers $a'$ of a target style $A'$. It can achieve unsupervised style transfer with minimal modifications to a source sentence $s$. Importantly, modern prototype editing algorithms rely solely on a style classifier to define their notion of style, so that they can transfer mentions of demographic groups when used with a classifier trained to identify such mentions.

Our approach consists of three phases. First, we train a multi-headed RoBERTa-based \citep{liu2019roberta} classifier $c$ to predict the presence of mentions of demographic groups $j$ in a sentence $s$. Second, following \citep{reid2021lewis}, we train a BART-based \citep{lewis2019bart} group-conditioned generator $g(s_{t},j)$: given a sentence $s$ consisting of $n$ tokens that mentions group $j$, we remove mentions of demographic groups from $s$ by masking tokens at positions $k$ with above-average attention weights $a_{k}\geq \bar{a}$, where $a_{k}$ represents the maximum attention weight at position $k$ in the penultimate layer of $c$ and the average is taken over all token positions for the sentence $s$. After merging consecutive masks, this yields a template $s_{t}$ on which $g(s_{t},j)$ is trained to reconstruct $s$. Third, we modify sentences $s$ that mention group $j$ to instead mention group $j'$ by first creating a template $s_{t}'$ as in \citep{lee2020stable}: we iteratively mask the tokens in $s$ for which masking most reduces the likelihood $p_{c}(j|s_{t}')$ of $j$ according to the group-presence classifier $c$, until it falls below a fixed threshold $T$. Then, we generate $s'$ as $g(s_{t}',j')$ using beam search with width 5 and selecting according to  $p_{c}(j'|s')-p_{c}(j|s')$, the difference in likelihoods assigned to $j'$ and $j$ for $s'$ by $c$.

We use this approach rather than the attention-based masking from \citep{reid2021lewis} for the third step because the attention values $a_{k}$ are shared between the prediction heads of $c$ for all groups $j$. This means that attention-based masking might mask tokens related to a third group $j''$ instead of tokens related to $j$ for sentences $s$ in which multiple demographic groups are mentioned. While unlikely to be very detrimental during the training of the class-conditioned generator $g$, using attention for creating templates $s_{t}$ for $g$ can thus cause group transfer to target the wrong source group $j$.

The unsupervised style transfer approach promises multiple advantages. First, style transfer is likely to reproduce terms encountered during training, helping it to pick up on rare demographic terms that are particular to its training distribution which can be chosen to equal the training distribution for downstream tasks. In addition, unlike concurrent work by \citet{qian2022perturbation}, unsupervised style transfer only requires labels $y_{j}(s)$ indicating the mention of demographic group $j$ in a sentence $s$ rather than large amount of expensive human-produced examples of demographic group transfer. This allows us to modify mentions of demographic groups across axes like gender, religion and race, rather than restricting ourselves to changes within each of these axes.

\paragraph{GPT-3}\label{para:GPT} Lastly, we make use of GPT-3~\citep{brown2020language} to transform markers of protected demographic groups in a zero-shot fashion. We use three methods for generating pairs based on GPT-3. First, we prepend an example $s$ mentioning group $j$ with the prompt "Please rewrite the following sentence to be about $j'$ rather than $j$". Second, we use GPT-3's edit
mode\footnote{https://openai.com/blog/gpt-3-edit-insert/} with a similar prompt. Lastly, we generate candidate modifications $s'$ by word replacement, and postprocess them using GPT-3's edit mode with the prompt "Fix grammatical errors and logical inconsistencies".

While the GPT-3 approach does not automatically adapt to the relevant distribution of demographic terms, it does not require any additional data, or training of language models. To ensure that mentions of demographic group $j$ were indeed replaced by $j'$ going from $s$ to $s'$, we use the same group-presence classifier $c$ as for the unsupervised style transfer approach
to heuristically identify successful group transfer and discard pairs $(s,s')$ for which group transfer failed, for all three of our approaches. Implementation details are described in \cref{appendix:STYLE} and \cref{appendix:examples} contains examples of generated sentences.

\subsection{Learning the similarity function}\label{subsection:learning}
In order to evaluate to what extent the proposed similarity criteria align with human intuition, we leverage human feedback, via a crowdsourcing study described in more detail in \cref{sec:experiments}, to obtain labels $\varphi(s, s')$ which indicate whether the pair $(s,s')$ should be treated similarly for the sake of individual fairness ($\varphi(s, s') = 0$) or not ($\varphi(s, s') = 1$). In particular, identifying which pairs align with human labelers' intuition about fairness can
help detect asymmetric
counterfactuals, as well as failed attempts at style transfer for which $s'$
cannot be interpreted as a meaningful modification of $s$.

Since labeling all of $C^e$ can be prohibitively expensive, we train a probabilistic model $p_{\hat{\varphi}}(s,s')$ on a labeled subset of $C^e$ and use it to predict $\varphi(s,s')$ for the remaining pairs $(s,s')$. The similarity function $\varphi$ that specifies individual fairness is then approximated as $\hat{\varphi}(s,s') := 1 \Leftrightarrow p_{\hat{\varphi}}(s,s')>t$ for a given classification threshold $t$. Instead of using bilinear logits based on
features for both $s$ and $s'$~\citep{mukherjee2020two}, we tokenize $s$ and $s'$
and train a BERT-based classifier on the concatenated tokens. This allows for a more holistic comparison
between $s$ and $s'$ as attention heads can directly attend to differences
between $s$ and $s'$ in earlier layers (see~\cref{appendix:active_appendix} for
more details).
\paragraph{Active Learning from human fairness judgments}\label{para:active}
To make optimal use of costly human queries, we employ active learning when training the
classifier $\hat{\varphi}$. We use the variation ratios $ 1-\max_{y}p(y|x)$ to select the data points with the largest uncertainty about the correct label, an approach that is often dubbed Least Confidence (LC) and estimate $p$ using a Dropout-based Monte-Carlo estimate  \citep{gal2016dropout,gal2017deep}. As in \citep{griesshaber2020fine}, we aim to save resources by precomputing features for the BERT-part of $\hat{\varphi}$ and performing Monte-Carlo dropout on the classification head of $\hat{\varphi}$ only. Concretely, after training $\hat{\varphi}$ on an initial randomly selected
dataset $D_{0}\subset C^e$ with labels $\varphi(s,s')$, we iteratively select new unlabeled training data $D_{i}\subset C^e \setminus \bigcup_{j<i} D_{j}$
with $|D_{i}|=1000$, based on the variation ratios, query labels for $D_{i}$, and retrain $\hat{\varphi}$ on $D_{i}$. As different annotators can disagree about whether or not two
sentences $s$ and $s'$ should be treated similarly, we use a majority vote for evaluation. Inspired by ~\citet{chen2022clean}'s approach for dealing with noise in crowdsourcing, we use a single human query per pair $(s,s')$ during active learning, and relabel pairs that are especially likely to be mislabeled after active learning has concluded.

\subsection{Training a fair(er) classifier}\label{subsection:downstream}
Finally, we train a fairness-aware classifier by accounting for the constraints defined by the learned similarity function. Specifically, we define the filtered constraint set
$\hat{C}^{r}=\{(s,s')\in C^e: \hat{\varphi}(s,s')=0\}$. We then train a RoBERTa-based \citep{liu2019roberta} downstream
classifier $f$, empirically enforcing the constraints implied by $\hat{C}^{r}$ by
using the Counterfactual
Logit Pairing (CLP) regularizer $\lambda \sum_{s,s': \phi(s,s') = 0}||l(s)-l(s')||_{2}$ of \cite{garg2019counterfactual}. Here, $l$ represents the logits of the classifier $f$. If $\hat{\varphi}$ accurately approximates human fairness intuitions, this approach avoids enforcing
constraints implied by asymmetric counterfactuals $(s,s')$ (pairs with $\varphi(s,s')=1$)
while properly enforcing actual constraints (pairs with $\varphi(s,s')=0$).
Further training details can be found
in~\cref{appendix:filtering_appendix}.

\section{Experiments}
\label{sec:experiments}

In this section, we experimentally evaluate our framework in the context of toxicity classification. Our key
findings are: (i) the pairs generated by our method cover a wider range of perturbations compared to word replacement pairs only (\cref{sec:diversity_results}), while mostly aligning with human intuition about  individual
fairness in toxicity classification (\cref{sec:relevance_results}); (ii) the underlying similarity function $\varphi$ can be approximated by
active learning from human judgements (\cref{sec:learning_results}), and (iii) the produced constraints can be
used to enforce individual fairness on a downstream toxicity classifier (\cref{sec:robustness_results}).


\subsection{Dataset and setup}\label{section:Dataset}
We focus on toxicity classification on the Jigsaw Civil Comments
dataset\footnote{https://www.kaggle.com/competitions/jigsaw-unintended-bias-in-toxicity-classification/data}.
The dataset contains around 2 million online comments $s$, as well as labels $toxic(s)$ indicating the fraction of human labelers that considered comment $s$ toxic. We define binary classification labels $y(s) := toxic(s)>0.5$. A subset $D$ of the Civil Comments dataset also contains labels $A_{j}(s)$ that indicate the fraction of human labelers that think comment $s$ mentions the demographic group $j$. We again define binary classification labels as  $y_{j}(s) := A_{j}(s)>0.5$ for these comments, and use them to train our group-presence classifier $c$. We only consider the subset $D' \subset D$ for which no nan-values are contained in the dataset, and the RoBERTa-tokenized version of $s$ does not exceed a length of 64 tokens. We furthermore split $D'$ into a training set containing $75\%$ of $D'$ and a test set containing the other $25\%$.

To build the pool $C^e$ of candidate pairs, for word replacement and style transfer, we attempt to produce modified comments $s_{j'}'$ mentioning group $j'$ for each $s \in D'$ for all demographic groups $j$ with $y_{j}(s)=1$ and all possible target groups $j'$. For GPT-3, we use a subset of $D'$ due to limited resources. We then combine \num[group-separator={,}]{42500} randomly selected pairs $(s,s')$ with $s$ in the training part of $D'$ for word replacement and style transfer each and a total of \num[group-separator={,}]{15000} pairs $(s,s')$ for our three GPT-3 approaches, to form the set of candidate constraints $C^e$. We similarly construct a set of test constraints of a fourth of $C^e$'s size from the test portion of $D'$. More technical details can be found in \cref{appendix:STYLE}.

Throughout this paper, whenever we report individual fairness for a classifier, we refer to the proportion of pairs $(s,s')$ in a test pool of similar pairs for which $f(s)=f(s')$ rather than $f(s)\neq f(s')$. This metric captures that every instance of treating similar pairs differently is harmful. It is also a natural upper bound for certified individual fairness \citep{ruoss2020learning,peychev2021latent} and prediction consistency \citep{yurochkin2020sensei} which consider equal predictions across all comments $s’$ similar to a given comment $s$ for simpler formal specifications of similarity.

\subsection{Coverage of generated fairness constraints}
\label{sec:diversity_results}
To validate that our approach covers a wider range of
perturbations than word replacement, we train $4$ different toxicity classifiers,
using CLP with different constraint sets: our full constraint set
$C^e$, as well as $C_1, C_2, C_3$ of the same size as $C^e$. The pairs in $C_1$,
also referred to as WR$_{50}$,  were generated by word replacement
based on the $50$ identity terms from \citep{garg2019counterfactual} \footnote{We did not discard any pairs based on a classifier for $C_1$.};
the pairs in $C_2$, also referred to as WR,  were generated by
word replacement by using the larger list of terms of \citep{smith2022m}; and
the pairs in $C_{3}$, also referred to as ST, were created
by the style transfer method. We cross-evaluate the performance of $4$
classifiers trained with CLP with $\lambda=5$,
using each of the constraint sets $C_{i}$ in terms of 
test-time individual fairness measured as
the proportion of similar pairs $(s,s')$ for which $f(s)=f(s')$ in $C^e$ and
each of the $4$ constraint sets $C_{i}$,
and balanced accuracy. \Cref{table:robtrans} reports
these numbers and the performance of a ``baseline'' model,
trained without CLP. \mycomment{Do we actually need the Ci,
or can we just use WR, WR50 and ST?}

The results in \cref{table:robtrans} indicate that each classifier achieves high individual fairness when evaluated on test constraint pairs corresponding to the constraints used during its training (\textit{in italics}) but performs worse when evaluated on other constraint pairs. This indicates that enforcing the similarity notions corresponding to different pair-generation methods can produce substantially different classifiers and that the adherence to individual fairness does not perfectly generalize across our generation methods. We note that training with CLP on $C^e$ or our style transfer pairs $C_{3}$ does not just yield significantly improved constraint adherence on $C_{3}$, but also generalizes well to $C_{1}$ and $C_{2}$ (see the numbers in bold), without losing much downstream accuracy. More details can be found in \cref{appendix:STYLE,appendix:filtering_appendix}.
\begin{table}[ht]
	\begin{center}
		\begin{tabular}{@{}lccccc@{}}
			\toprule
			Training/Evaluation &  BA & WR$_{50}$ ($C_{1}$) & WR ($C_{2}$)  & ST ($C_{3}$) & Full $C^e$  \\
			\midrule
			Baseline (no CLP) & $ 88.4 \pm 0.1  $& $ 78.4 \pm 1.4 $ & $ 81.3 \pm 1.5 $  & $ 76.7 \pm 1.8 $ &  $ 78.5 \pm 1.5$   \\
			\midrule
			CLP WR$_{50}$($C_{1}$)  & $ 87.0 \pm 0.3 $ & $ \mathit{98.3 \pm 0.1} $ & $ 89.1 \pm 1.9 $& $86.3 \pm 1.9$  &  $ 87.3 \pm 1.8 $  \\
			CLP WR ($C_{2}$) & $ 87.2 \pm 0.1 $& $ 93.1\pm 1.2$ & $\mathit{98.2\pm 0.4}$ & $90.5 \pm 1.7 $ &  $92.9\pm 1.2$    \\
			CLP ST ($C_{3}$) & $ 85.9 \pm 0.1 $& $ \mathbf{95.3\pm 0.4}$ & $\mathbf{97.1 \pm 0.3}$ & $\mathit{95.4 \pm 0.4}$  &  $95.5\pm 0.3$     \\
			CLP  Full $C^e$  & $ 85.0 \pm 3.4 $&  $\mathbf{95.5 \pm 0.9} $ & $ \mathbf{97.8 \pm 0.6} $ & $94.9 \pm 0.9$ & $ \mathit{95.7 \pm 0.8} $  \\
			\bottomrule
		\end{tabular}
	\end{center}
	\caption{Balanced accuracy and individual fairness
	evaluated on different $C_{i}$
	(proportion of  pairs $(s,s')\in C_{i}$ for which $f(s)=f(s')$)
	for a RoBERTa-based toxicity classifier $f$ trained
	with CLP using different constraint sets $C_{i}$ in
	each row. Results are averaged over 5 runs and $\pm$ indicates the
	difference from the bounds of a naive $95\%$ confidence interval assuming
	normally distributed errors. \mycomment{Ci is abouse of notation, but it
	should still be clear that this includes C here?}}\label{table:robtrans}
\end{table}
\vspace{-2.5mm}
\subsection{Relevance of generated fairness constraints}
\label{sec:relevance_results}

To validate that the fairness constraints we generated are relevant and intuitive, we conducted a human evaluation using Amazon's MechanicalTurk. Workers were presented with a pair $(s,s')$ consisting of a comment $s$ from the Civil Comments dataset, as well as a modified version $s'$ and asked about whether they believe that the two comments should be treated similarly. Treatment was framed in terms of toxicity classification for content moderation, ensuring that we verify the relevance of the learned notions relevant to this specific task. Workers were also asked whether the demographic group was transferred correctly from a given $j$ to a given $j'$ and whether the content of $s$ has been preserved in $s'$ apart from the demographic group transfer.  Further details can be found in \cref{appendix:mturk}.

We collected human feedback for a set $S$ containing a total of $720$ pairs $(s,s')$ with $240$ each  produced by our style transfer approach, GPT-3 in a zero-shot fashion, and word replacement using the list from \citep{garg2019counterfactual}\footnote{We again did not discard pairs based on a classifier for these pairs.}. The $240$ pairs per method were split into $80$ pairs for each of the axes male$\leftrightarrow$female, christian$\leftrightarrow$muslim and black$\leftrightarrow$white, half of which with $s$ mentioning the first demographic group and half of them with $s$ mentioning the second group. Each pair $(s,s')$ was shown to nine workers and responses aggregated by a majority vote. \Cref{table:Summary methods} reports how often workers affirmatively answered the three questions from the previous paragraph for different methods.

\begin{table}[t]
	\begin{center}
		\begin{tabular}{ @{}lcccc@{} }
			\toprule
			Generation Method & Different Treatment Unfair &Group Transfer & Content Preservation \\
			\midrule
			Word replacement  & 85.9 (97.5) &  89.3 (95.0)   & 88.1 (100)  \\
			Style Transfer  & 85.2 (96.2) & 79.2 (85.4) & 79.2 (91.2)\\
			GPT-3 &  83.2 (93.7) & 81.9 (89.5)  & 78.4 (87.9) \\
			\bottomrule
		\end{tabular}
	\end{center}
	\caption{Crowdsourced answers to questions about comment
	pairs $(s,s')$ for different methods for demographic group
	transfer: do the two comments deserve similar treatment, was
	the demographic group successfully transferred, and was content preserved
	apart from that? The first number represents the percentage of the answer
	across all queries, while the second number (in brackets) represents the
	percentage of comment pairs for which the answer was the majority vote across
	$9$ queries.} \label{table:Summary methods}
\end{table}

The results in \cref{table:Summary methods} demonstrate that all three methods produce relevant fairness constraints, according to a majority of annotators. At the same time, the workers' feedback indicates that the methods were mostly successful at modifying the mentioned demographic group, and at preserving content.
While word replacement generally performs better in terms of group transfer and content preservation, this only translates to a small advantage in terms of producing pairs that represent actual fairness constraints $(\varphi(s,s')=0)$. See \cref{table:Summary methods full,table:Summary axis} for more detailed results.

\subsection{Learning the similarity function}
\label{sec:learning_results}
Since labeling pairs through human feedback is costly, obtaining labels for all candidate pairs in $C^e$ can be prohibitively expensive. Therefore, we employed our active learning approach to efficiently train our classifier $\hat{\varphi}$ from relatively few human judgments, with the goal of using it to identify pairs that represent actual fairness constraints on the remaining pool of candidates. We conducted $6$ steps of active learning with $1000$ queries each, selected by the LC criterion. Failed queries were discarded, so that we ended up with  $5490$ labeled pairs $((s,s'),\varphi(s,s'))$. Details can be found in \cref{appendix:active_appendix}.

We evaluate our learned classifier on a test set $T$ consisting of $500$ randomly selected pairs from $C^e$ for which five annotators were asked to predict the American's fairness judgment. We labeled pairs based on whether the majority thought they should be treated similarly ($\varphi(s,s') = 0$), or not ($\varphi(s,s') = 1$). Because $78.8\%$ of the pairs $(s,s')$ in $T$ represented fairness constraints $(\varphi(s,s')=0)$, we report Balanced Accuracy (BA), in addition to standard accuracy (ACC) and the true positive and negative rates (TPR and TNR). \cref{table:AL R} displays these metrics for classifiers resulting from our active learning method for different classification thresholds $t$ and with and without subsequent relabeling.

\begin{table}[ht]
	\begin{center}
		\noindent
		\begin{tabular}{@{}lcccc@{} }
			\toprule
			Method & ACC & TNR & TPR & BA\\
			\midrule
			Constant Baseline & 78.8 & $\mathbf{100.0}$ & 0.0 & 50.0 \\
			\midrule
			Active Learning t=0.5 & $ 79.8 \pm 0.3$  & $ 97.2 \pm 0.3 $ & $ 15.1 \pm 1.2 $ & $56.1$  \\
			Active Learning + Relabel t=0.5 & $\mathbf{81.1 \pm 0.3}$ & $95.5 \pm 0.7$ & $  28.6 \pm 2.2$ & $62.0$ \\
			\midrule
			Active Learning t=0.1 & $ 80.0 \pm 0.5 $  & $ 95.2 \pm 0.7$ & $ 23.7 \pm 3.5 $ & $59.4$\\
			Active Learning + Relabel t=0.1 &$ 80.7 \pm 0.6 $ & $ 93.0 \pm 0.9 $ & $ 35.0 \pm 1.3 $ & $64.0$ \\
			\midrule
			Active Learning t=0.01 &$78.7 \pm 1.1 $ & $87.5 \pm 2.1$ & $45.7 \pm 1.8$ & $66.6$  \\
			Active Learning + Relabel t=0.01 & $78.3 \pm 0.7$ &$86.8\pm 1.5$ & $\mathbf{46.6\pm 2.5}$ & $\textbf{66.7}$ \\
			\bottomrule
		\end{tabular}
	\end{center}
	\caption{Performance
		(in terms of accuracy, true negative/positive rate
			and balanced accuracy) for
		similarity classifiers $\hat{\varphi}$
		trained on human fairness judgments with
			and without relabeling, evaluated on the test set $T$
		with different decision thresholds $t$.
		Results are averaged over $10$ repetitions of training on the data labeled in the last step/the relabeled data.
		$\pm$ indicates the difference from the upper/lower bound of a naive $95\%$
		confidence interval assuming normally distributed errors.}\label{table:AL R}
\end{table}
We observe that $\hat{\varphi}$ performs substantially better than random, achieving BA of $66.7\%$ when used with an aggressive classifier threshold $t$. \Cref{table:AL R} also validates our relabeling approach: after observing that our classifier was biased towards predicting $\hat{\varphi}(s,s')=0$ on a held-out validation set, we collected two additional labels for $500$ pairs $(s,s')$ for which both the human and the predicted label were equal to zero, selected based on the LC criterion. The majority vote over all three annotators was $\varphi(s,s')=1$ for $47 \%$ of these pairs, showing that our approach correctly identified pairs that were likely to be mislabeled. Retraining our classifier on the updated majority votes also substantially increased TPR at little costs to TNR, especially for balanced classification thresholds $t$ close to $0.5$. According to a qualitative evaluation, many sentence pairs $(s,s')$ predicted to not represent fairness constraints $(\hat{\varphi}(s,s')=1)$ had the words "boy" or "man" replaced by terms denoting identity membership. Such sentence pairs, like "You boys don't go fishing when you go on those vacations, do you?" and "You Hindus don't go fishing when you go on those vacations, do you?" were often not seen as fairness constraints by human annotators, as the use of the identity term can be read as mocking. $\hat{\varphi}$ also identified sentence pairs $(s,s')$ for which $s'$ was unrelated to $s$, that were sometimes produced by GPT-3, as not representing fairness constraints. Further results can be found in \cref{appendix:active_appendix}.
\subsection{Training a fairer downstream classifier}
\label{sec:robustness_results}
Lastly, we evaluate whether the pairs $\hat{C}^{r}$ obtainted
by filtering $C^e$ with $\hat{\varphi}$
(trained with relabeling, threshold $t=0.5$)
can help with learning an individually fair
downstream classifier, by training a RoBERTa-based toxicity classifier
$f$ using CLP with $\lambda=5$. More details can be
found in \cref{appendix:filtering_appendix}.
We train toxicity classifiers with CLP using constraint sets defined by
word replacement ($C_{1}$ and $C_{2}$ as in \cref{sec:diversity_results})
and using all of $C^e$, or the filtered version  $\hat{C}^{r}$.
Additionally, we train on a challenging set of constraints, $C^e_{adverse}$
that consists of $C^e$ and \num[group-separator={,}]{10000} adversarially selected pairs ($s,s'$)
created by randomly selecting comments $s$ with toxicity label $y(s)=1$
and randomly selecting comments $s'$ with label $y(s)=0$ from $D$, and
a filtered version $\hat{C}_{adverse}^{r}$ of $C^e_{adverse}$ using
threshold $t=0.5$.

We then evaluate these classifiers in terms of Balanced Accuracy (BA) and individual fairness on the resulting classifiers on the test set $T$. \cref{table:endresults}
shows that compared to word replacement our expanded constraint set $C^e$ consistently yields better adherence to human-validated fairness constraints at the cost of a small drop in BA. However, we do not find a clear improvement from using the filtered constraint set $\hat{C}^{r}$ over the full set of constraints $C^e$. We hypothesize that this is due to our classifier $\hat{\varphi}$'s limited True Positive Rate combined with $\varphi(s,s')$ equalling zero for most pairs $(s,s') \in C^e$ according to human annotators, such that even filtering with a perfect classifier $\hat{\varphi}$ might be of limited utility as most constraints in $C^e$ are indeed relevant. This is supported by our results for $C^e_{adverse}$, where filtering substantially improves BA. Further experiments can be found in \cref{appendix:filtering_appendix}.

\begin{table}[ht]
\begin{center}
\begin{tabular}{ @{}lccc@{} }
\toprule
Method & BA & Fairness ($T$) \\
\midrule
Baseline no CLP & $ \mathbf{88.2 \pm 0.4} $ &  $ 82.1 \pm 2.1 $ \\
\midrule
CLP WR$_{50}$ $(C_{1})$  & $  87.1 \pm 2.0  $ & $ 92.8 \pm 0.9  $  \\
CLP WR $(C_{2})$ & $ 87.2 \pm 0.2 $ & $ 95.8 \pm 0.9$   \\
CLP Full constraint set $C^e$ &  $85.9 \pm 0.3$ & $96.5 \pm 1.4$ \\
CLP Filtered constraint set $\hat{C}^{r}$  & $85.9 \pm 0.5$ & $\mathbf{97.4\pm 1.1}$\\
\midrule
CLP  $C^e_{adverse}$ &  $71.1 \pm 17.4$ &$97.8 \pm 2.2$   \\
CLP  Filtered $\hat{C}_{adverse}^{r}$ &  $79.3 \pm 2.2$ &$98.7 \pm 0.6$ \\
\bottomrule
\end{tabular}
\end{center}
\caption{Balanced accuracy and individual fairness (proportion of human-validated
similar pairs $(s,s')\in T$ with $\phi(s,s')=0$, for which $f(s)=f(s')$)
for toxicity classifiers $f$ trained with CLP and different sets of similar pairs.
Results are averaged over 5 training runs. $\pm$ indicates the difference
from the upper/lower bound of a naive $95\%$ confidence interval assuming
normally distributed errors.}\label{table:endresults}
\end{table}

\section{Conclusion}
\makeatletter
\def\blfootnote{\xdef\@thefnmark{}\@footnotetext}
\makeatother
We proposed a framework for producing expressive and intuitive specifications for individual fairness in text classification. We experimentally demonstrated that our pairs are more expressive than word replacement pairs and that most of the generated pairs were relevant in the context of toxicity classification according to human annotators. We also trained a classifier that automatically identifies relevant pairs and showed that our approach can improve the fairness of a toxicity classifier. 
\blfootnote{\hspace{-0.65cm} Our dataset of fairness judgments is available at \url{https://github.com/eth-sri/fairness-feedback-nlp}}
\section{Ethics Statement}
Our human evaluation experiments involving workers from Mechanical Turk were reviewed and approved by the ETH Zurich Ethics Commission as proposal EK 2022-N-117. Workers on Mechanical Turk were warned that they might be shown offensive comments as part of our study and were able to opt out of participating in our study at any time. We also made sure that the per-task compensation was sufficiently high to result in a hourly compensation exceeding the US federal minimum wage. More details on our human evaluation experiments can be found in \cref{appendix:mturk}.

While we believe that our results show that learning more precise fairness notions by involving human feedback is a very promising area of research, we caution against directly using the labels from our human evaluation study $\phi$ for evaluating fairness in high-stakes real-world applications of toxicity classification. First, our results show that there is substantial disagreement between different survey participants about which pairs $(s,s')$ require equal treatment by a fair classifier. While resolving these disagreements via a majority vote is a natural choice, other approaches may be desired in some contexts, for example enforcing equal treatment whenever at least one participant believes it is required or explicitly accounting for distributional information via jury learning \citep{gordon2022jury} or multi-annotator architecture \citep{davani2022dealing}. \mycomment{, especially considering that differential treatment of a similar pair $(s,s')$ might be more costly than equal treatment of a non-similar pair} Second, our survey participants are geographically biased to the US and are neither direct stakeholders, nor experts in discrimination law and hate speech. 

Given that our learning approach shows promising signs of being able to improve upon existing approaches to quantifying individual fairness despite large amounts of disagreement, which is likely to be less common for actual stakeholders and experts, we recommend using it in conjunction with fairness judgments provided by application-specific experts and stakeholders. The collection of additional application-specific fairness labels $\phi$ is especially important when our framework is applied to downstream tasks other than toxicity classification, for which human agreement with the relevance of generated pairs to individual fairness could theoretically be substantially lower than indicated by our study. In addition, the validity of collected labels should be reassessed periodically in order to account for shifts in linguistic, cultural and social contexts \citep{aroyo2019crowdsourcing}.

In addition, our generated pairs only cover perturbations along a fixed list of demographic groups such that fulfilling individual fairness on these pairs does not guarantee invariance to other forms of perturbations. Correspondingly, evaluation on these pairs can help to provide better evidence about the extent of demographic biases but not necessarily other forms of bias present in a text classifier. 

We believe that it is exceedingly difficult to capture human intuitions about individual fairness in complex domains like text using simple rules. Hence, our work implicitly defines similarity based on human-labeled data, which can be difficult to interpret. To instill further in the derived specifications, future work could aim to extract more interpretable specifications from the human feedback, for example using techniques from explainable machine learning. 
\section{Reproducibility Statement}
We provide code to reproduce our generation pipeline and our experiments on synthetic data, as well as our dataset of human fairness judgments at \url{https://github.com/eth-sri/fairness-feedback-nlp}. 
All of our experiments involving transformer language models use the huggingface transformers library \citep{wolf2020transformers}. Additional details on our human evaluation are provided in \cref{appendix:mturk}.

\section*{Acknowledgements}
We would like to thank  Dan  Hendrycks,  Mislav  Balunovi\'{c},  Afra Amini and Dominik Stammbach for helpful comments and discussions during early stages of this work. We also thank the anonymous reviewers for their insightful comments and constructive feedback that helped to improve this paper.

Florian Dorner is grateful for financial support from the Max Planck ETH Center for Learning Systems (CLS) received during part of this work.  Nikola Konstantinov's contributions to this publication were made possible by an ETH AI Center postdoctoral fellowship.

\message{^^JLASTBODYPAGE \thepage^^J}

\clearpage
\bibliography{references}
\bibliographystyle{iclr2023_conference}

\message{^^JLASTREFERENCESPAGE \thepage^^J}


\ifbool{includeappendix}{%
	\clearpage
	\appendix
	\counterwithin{table}{section}
\section{Further Details on Human evaluation}\label{appendix:mturk}
In order to participate, workers had to live in the US and be above 18 years old in addition to being experienced with MechanicalTurk (having completed more than $5000$ HITs\footnote{Bundled tasks on MechanicalTurk for which a remuneration is received on completion} and having a good reputation ($97\%$ acceptance rate across all of the worker's HITs). Workers were warned about the potentially offensive content of some of the comments show in the study by the following statement: "Please note that this study contains offensive content. If you do not wish to see such content, please withdraw from the study by leaving this website." and were also told that they could withdraw from the study at any later point: "You may withdraw your participation at any time without specifying reasons and without any disadvantages (however, you will not get paid for the current HIT in case you withdraw before completing it)".

After encountering a high prevalence of bots, malicious workers or workers that fundamentally misunderstood our task instructions during pilot experiments, we had workers pass a qualification test by providing correct answers for nine out of ten queries $\varphi(s,s')$ for pairs that were hand-designed to have a relatively obvious correct answer. We validated these hand-designed pairs in a separate experiment, querying workers about $\varphi(s,s')$ for 11 pairs ($s,s'$), and asking them to verbally explain each of their decisions, paying a total of $\$1.83$. We only included hand-designed pairs in the qualification test if at least eight out of ten workers produced the intended answer during this experiment, and no worker brought forward convincing arguments against this answer being correct.

Workers were paid $\$0.83$ for a battery of 11 queries $\varphi(s,s')$ for the Active learning and the test set $T$ and first $\$1.20$, later $\$1.30$ for 11 queries about all questions presented in \cref{table:Summary methods}. In both cases, workers had to explain one of their answers verbally, and one of the queries was a attention check pair $(s,s')$ with a relatively obvious correct answer constructed in the same way as for the qualification tests. 

Blocks of queries with wrong answers to the attention check question or (partially) incoherent verbal explanations were manually reviewed, and thrown out in case we were not able to find evidence that the worker had correctly understood the task in which case the worker was not paid. Workers were made aware of this by the following statement in the task description:  "To ensure workers pay attention, we include pairs of comments for which there is a clear correct answer at random positions. Please make sure to carefully read every comment and answer diligently, as your HIT will be rejected if your answer for one of these pairs of comments is wrong,  in which case you will not receive a payment for the HIT. We will also ask you to produce an explanation for one of your answers. Please write at least one sentence and make sure your explanation is coherent with the answer you gave, as your HIT might be rejected otherwise in which case you will not receive a payment for the HIT.". According to https://turkerview.com, a tool used by many professional workers on MechanicalTurk, we paid workers an average hourly rate of $\$16.17$, clearly exceeding the US minimum wage. While this is likely an overestimate, as not all workers use turkerview, the hourly rate is so high, that we still exceed the minimum wage for workers taking twice as long as the average worker using turkerview. 

Workers were introduced to the task as follows: "We are investigating fairness in toxic comment classification and are looking for pairs of comments with similar content that should be treated the same by a content moderator (if one of them was removed, the other one should be removed too). You will be shown pairs of comments and be asked whether the comments are similar in various ways and whether you would consider it unfair for a content moderator to remove one of the comments in each pair but not the other.". The queries read as follows for the first task:
\begin{itemize}
    \item Without further context, would the average american consider it unfair or acceptable for a content moderator to delete one of the shown comments but not the other?
    \begin{itemize}
      \item It would be unfair for one of the two comments but not the other to be deleted (Both comments ought to be treated the same way, no matter what)
      \item If comment a) was deleted, comment b) should be deleted as well, but not the other way around. (Deleting comment b) would be more justifiable)
      \item If comment b) was deleted, comment a) should be deleted as well, but not the other way around. (Deleting comment a) would be more justifiable)
      \item It would be acceptable to delete either of the comments without deleting the other. (Treating the comments differently either way would be fine)
  \end{itemize}
\end{itemize}
and as follows for the second one (Placeholders for group terms are presented in \textit{Italics}. The answer summarized in \cref{table:Summary methods} is highlighted in \textbf{bold}):
\begin{itemize}
  \item Without further context, would you consider it unfair or acceptable for a content moderator to delete one of the shown comments but not the other?
  \begin{itemize}
      \item \textbf{It would be unfair for one of the two comments but not the other to be deleted (Both comments ought to be treated the same way, no matter what)}
      \item If comment a) was deleted, comment b) should be deleted as well, but not the other way around. (Deleting comment b) would be more justifiable)
      \item If comment b) was deleted, comment a) should be deleted as well, but not the other way around. (Deleting comment a) would be more justifiable)
      \item It would be acceptable to delete either of the comments without deleting the other. (Treating the comments differently either way would be fine)
  \end{itemize}
  \item What do you expect the average American to answer to the previous question?
    \begin{itemize}
      \item \textbf{It would be unfair for one of the two comments but not the other to be deleted (Both comments ought to be treated the same way, no matter what)}
      \item If comment a) was deleted, comment b) should be deleted as well, but not the other way around. (Deleting comment b) would be more justifiable)
      \item If comment b) was deleted, comment a) should be deleted as well, but not the other way around. (Deleting comment a) would be more justifiable)
      \item It would be acceptable to delete either of the comments without deleting the other. (Treating the comments differently either way would be fine)
     \end{itemize}
   \item Do you consider one of the two comments to be more factually correct than the other?
   \begin{itemize}
      \item Comment a
    \item Comment b
    \item \textbf{Neither comment / too little context to tell}
   \end{itemize}
   \item Do you consider one of the two comments to be more grammatically correct than the other?
      \begin{itemize}
      \item Comment a
    \item Comment b
    \item \textbf{Neither comment}
   \end{itemize}
   \item Is comment a) about \textit{group a} and comment b) about \textit{group b}?
   \begin{itemize}
       \item \textbf{Yes}
       \item No, comment a) is not about \textit{group a}
       \item No, comment b) is not about \textit{group b}
       \item No, neither
   \end{itemize}
   \item Apart from differences related to \textit{group a} and \textit{group b}, are both comments similar in terms of content?
   \begin{itemize}
       \item \textbf{Yes, they are almost the same.}
       \item They are somewhat similar, but differ in some additional details.
       \item There is an important additional difference between the comments' content
   \end{itemize}
\end{itemize}

\cref{table:Summary methods full} shows an extended version of \cref{table:Summary methods} and includes human annotator's answers to additional questions. It shows that the reason why the advantages of word replacement in terms of group transfer and content preservation do not fully translate to producing pairs that represent actual fairness constraints could be due to its worse performance in terms of preserving factuality. Indeed, we found examples in which word replacement transformed "white house" to "black house"; or Obama is referred to as "white" rather than "black" in a modified comment. These pairs were not seen as fairness constraints by most annotators, while also being judged badly in terms of preserving factuality.
\begin{table}[t]
	\begin{center}
		\begin{tabular}{ @{}lcccc@{} }
			\toprule
			Metric/Method & Word replacement & Style Transfer & GPT-3 \\
			\midrule
			Unfair: Average American & 84.9 (97.5) & 84.6 (95.8) & 83.4 (95.0) \\
			Unfair: Own Opinion & 85.9 (97.5) & 85.2 (96.2) & 83.2 (93.7) \\
			Group Transfer & 89.3 (95.0) & 79.2 (85.4) & 81.9 (89.5)\\
			Content preservation & 88.1 (100) & 79.2 (91.2) & 78.4 (87.9) \\
			Same Factuality & 73.0 (84.1) & 76.2 (87.5) & 78.5 (89.1) \\
			Same Grammaticality & 91.2 (99.1) & 92.9 (97.9) & 92.9 (98.3)\\
			\bottomrule
		\end{tabular}
	\end{center}
	\caption{Human evaluation: Answers to questions about comment pairs $(s,s')$ grouped by different methods for demographic group transfer. The first number represents the fraction of the answer across all queries, while the second number (in brackets) represents the fraction of comment pairs for which the answer was the majority vote across $9$ queries.} \label{table:Summary methods full}
\end{table}

\cref{table:Summary axis} shows the results of the human evaluation on our test set $S$ split along the axis of attribute transfer, rather than generation method as in \ref{table:Summary methods}. Along with the results in \cref{table:Summary methods} they show that despite the general agreement about the relevance of the generated fairness constraints, there is  substantial disagreement between annotators when it comes to deviations from the most common answer across all comments. In all cases, the fraction of comments with majority vote equal to that answer is substantially higher than the overall fraction of these votes across all comments and annotators. The same is true for our set $T$ of $500$ randomly selected pairs from $C^e$ for which we only asked about the average American's fairness judgment: $70.9\%$ of the annotations were $\varphi(s,s')=0$, while the same was true for $78.8\%$ of the per-comment pair majority votes.

\begin{table}[ht]
	\begin{center}
		\begin{tabular}{ @{}lcccc@{} }
			\toprule
			Metric/Method &  male$\leftrightarrow$female &  black$\leftrightarrow$white & christian$\leftrightarrow$muslim \\
			\midrule
			Unfair: Average American & 83.5 (96.6) & 82.2 (94.5) & 87.2 (97.0) \\
			Unfair: Own Opinion & 83.5 (96.6) & 82.4 (92.9) & 88.4 (97.9) \\
			Group Transfer & 82.6 (91.6) & 81.6 (86.6) & 86.2 (91.6)\\
			Content preservation & 84.9 (95.4) & 79.5 (92.0) & 81.3 (91.6) \\
			Same Factuality & 75.3 (82.9) & 73.6 (85.0) & 78.8 (92.9) \\
			Same Grammaticality & 90.5 (97.5) & 92.2 (98.3) & 94.3 (99.5)\\
			\bottomrule
		\end{tabular}
	\end{center}
	\caption{Human evaluation: Answers to questions about comment pairs $(s,s')$ grouped along demographic group transfers along different axes. The first number represents the fraction of the answer across all queries, while the second number (in the brackets) represents the fraction of comment pairs for which the answer was the majority vote across $9$ queries.}\label{table:Summary axis}
\end{table}

Our dataset including the pairs generated by our approach and aggregate human fairness judgments can be accessed at \url{https://github.com/eth-sri/fairness-feedback-nlp}.
\clearpage

\section{Further details on style transfer}\label{appendix:STYLE}
All of our experiments involving transformer language models use the huggingface transformers library \cite{wolf2020transformers}. 
\paragraph{Unsupervised style transfer}
To transform markers of demographic groups in sentences, we first finetune a Multi-headed RoBERTa-based \citep{liu2019roberta} classifier $c$ to predict labels $y_{j}$ indicating the presence of markers of a demographic group $j$ from a list of protected demographic groups $J$ in a sentence $s$. We use the population labels ("Black", "Male", "Heterosexual", "Muslim", etc.) that are provided for a subset of the Civil comments dataset. The group-presence classifier $c$ is based on the roberta-base model, followed by a linear layer with $768$ neurons applied to the output embedding of the first token only, a Tanh layer, another linear layer mapping to a single dimension, and a Sigmoid layer. We train $c$ for $3$ epochs with a batch size of $16$ and use the Adam optimizer \cite{kingma2014adam} with learning rate $0.00001$ to optimize the binary Cross Entropy loss, reweighed by relative label frequency in the dataset. \cref{table:C-RESULTS} shows the balanced accuracy on the test set for all demographic groups in the dataset. For our downstream applications of $c$, we restrict ourselves to the demographic groups for which the classifier $c$'s balanced accuracy is above $90\%$. Furthermore, we also exclude the group labeled "mental illness" because the word replacement lists we used lack a clear analogon.

Then, we finetune a BART-based \citep{lewis2019bart} generator $g$ on a mask-filling task on the same data: For every data point $s$, we sample a group from the set of demographic groups $j$ mentioned in $s$, i.e. $\{j:y_{j}(s)=1\}$, skipping sentences $s$ for which no group $j$ meets this criterion. Inspired by \citep{reid2021lewis} we mask all of $s$'s tokens that have an above-average attention value for the 11th layer of the classifier $c$, merge consecutive mask tokens into one, and prepend the name of the sampled group $j$ to the masked sentence before fedding it to the generator $g$. The generator $g$ is then finetuned to reconstruct $s$ using token-wise Cross Entropy. 

The BART-based generator $g$ is trained starting from the pretrained facebook/bart-large model for a single epoch with batch size 4, again using Adam and a learning rate of $0.00001$. For filling in masked sentences, we pick the completion with the largest difference in the classifier $c$'s pre-sigmoid activation for the target and source demographic groups $j'$ and $j$ among candidate sentences produced by a beam search generation using the generator $g$ with width $5$.

To transfer an example $s$ from mentioning group $j$ to mentioning group $j'$, we follow \citep{lee2020stable} and iteratively mask the token for which masking reduces $p_{c}(y_{j}|x)$ the most, until we reach a threshold of $p_{c}(y_{j}|x)< 0.25$. We use this approach rather than the attention-based masking from \citep{reid2021lewis} because of the lack of theoretical motivation for using attention to identify important features \citep{bastings2020elephant}, and because attention scores are the same for all of our model's group-presence prediction heads, rather than specific to a particular group $j$.\footnote{We used attention during the training of $g$, for which dropping out some tokens unrelated to $j$ is less problematic, in order to save resources.} Then, we prepend a verbal representation of label $j'$ to $s$ to form a prompt $p$, and generate a sentence $s'$ as $g(p)$.

\begin{table}[ht]
\begin{center}
\begin{tabular}{@{}lc|cc|cc@{} }
\toprule
Category & BA & Category  & BA & Category & BA \\
\midrule
Male & $96.5$ & Christian & $96.6$ & Physical disability & $54.9$  \\
Female & $97.8$ & Jewish & $98.9$ & Intellectual disability & $54.3$ \\
Transgender & $99.3$ & Muslim & $98.9$ &Mental illness& $98.3$ \\
Other gender & $50.0$ & Hindu & $98.2$ & Black  & $99.2$ \\
Heterosexual & $98.1$ & Buddhist & $99.2$ & White & $99.5$   \\
Homosexual & $99.3$ & Atheist & $99.6$ & Asian & $98.3$ \\
Bisexual & $65.4$ &Other religion& $50.0$ &Latino& $96.6$ \\
Other sexuality & $50.0$ & Other disability & $50.0$  & Other race& $55.5$ \\
\bottomrule
\end{tabular}
\end{center}
\caption{Balanced accuracies of the group-presence classifier $c$ for different labels}\label{table:C-RESULTS}
\end{table}

\paragraph{Word replacement}
Our word replacement approach is based on the list of words provided in \cite{smith2022m}: Given a sentence $s$ mentioning demographic group $j$ and a target attribute $j'$, we replace all words in $s$ that are on the list associated with $j$ with random words from the list associated with $j'$, replacing nouns with nouns and descriptors with descriptors whenever possible, and nouns with descriptors otherwise. The full list of words we used for word replacement is displayed in \cref{table:WR}.

\paragraph{GPT-3}
We accessed GPT-3 using OpenAI's API\footnote{https://openai.com/api/}. For our first approach, we used the "text-davinci-001" version of GPT3 in a zero-shot manner with the prompt: "Please rewrite the following sentence to be about $j'$ rather than $j$:" followed by a new line and the targeted sentence $s$. The second approach was based on the beta-version of GPT-3's editing mode \footnote{https://openai.com/blog/gpt-3-edit-insert/}. Here, $s'$ is produced using the model "text-davinci-edit-001" with the instruction "Rewrite the text to be about $j'$  rather than $j$". Lastly, we used to same model in conjunction with word replacement: First, we generated a candidate sentence $s''$ using the procedure described in the word replacement section. Then, in order to fix issues caused by the context-blindness of the word replacement approach, we postprocessed $s''$ using "text-davinci-edit-001" with the instruction "Fix grammatical errors and logical inconsistencies" to produce $s'$. We used temperature $= 0.7$ and top\textunderscore p$=1$ in all our approaches and used max\textunderscore tokens$=64$ for "text-davinci-001" to control the length of the modified sentence $s'$.

Please refer to the most up-to-date version of OpenAI's usage policy\footnote{(\url{https://platform.openai.com/docs/usage-policies}} regarding content generation using GPT-3.

\paragraph{Post-filtering} For all three approaches, we performed a post-filtering step to reduce the prevalence of unsuccesful attempts at demographic group transfer in our set of constraints $C^e$. Given a pair $(s,s')$ of an original sentence and a modified version, we only include it in our set of constraints $C^e$, if the classifier probability $p_c(y_{j'}|s')$ for label $j'$ is below $0.5$ and the classifier probability $p_c(y_{j}|s')$ for label $j$ is above $0.5$. \newline
As mentioned in \cref{section:Dataset}, we attempt to produce modified comments $s_{j'}'$ mentioning group $j'$ for each $s$ in $D'$ for all demographic groups $j$ with $y_{j}(s)=1$ and all possible target groups $j'$ for word replacement and style transfer. For GPT-3, we attempted a total of $75$ generations for each of our three generation modes per axis pair of demographic groups $(j,j')$ and direction of group transfer, with the source sentences $s$ randomly selected among the sentences with label $j$ in $D'$. For constructing the secondary test set $S$, we attempted more generations for the axes male$\leftrightarrow$female, christian$\leftrightarrow$muslim and black$\leftrightarrow$white, homosexual$\leftrightarrow$heterosexual. The latter axis was left out of $S$ because we found that the rate of successful generations was too limited. We generated a maximum of $2250$ attempts up until a total of $250$ successful generations (post-filtering step passed) for GPT-3's zero-shot mode, a maximum of $750$ until to a total of $100$ successful generations for GPT-3's edit mode, and up until a total of $100$ successful generations for GPT-3 based postprocessing of word replacement. \cref{table:generations} shows the overall amount of generated pairs per method.

\begin{table}[ht]
\begin{center}
\begin{tabular}{ @{}lcccc@{} }
\hline
Generation Method & Total (Train) & Total (Test) & In $C^e$ (Train) & In $C^e$ (Test) \\
\hline
Word Replacement & $980667$ & $331490$ & $42500$ & $10625$ \\
Style Transfer & $681111$ & $229883$ & $42500$ & $10625$ \\
GPT-3 Zero-Shot & $6322$ & $2139$ & $6200$ & $1550$ \\
GPT-3 Edit Mode & $3704$ & $1199$ & $3500$ & $875$ \\
GPT-3 Postprocessing & $5330$ & $1831$ & $5300$ & $1325$ \\
\hline
\end{tabular}
\end{center}
\caption{Amount of generated pairs $(s,s')$ per generation method.}\label{table:generations}
\end{table}

As an additional experiment to validate the increased diversity of our constraint set $C^e$ we train a similarity classifier\footnote{Using the same architecture as for our active learning experiments described in \cref{appendix:active_appendix}} $\hat{\varphi}$, on $C^e$ to distinguish pairs $(s,s')$ generated by word replacement from pairs generated by style transfer or GPT-3. Training on $100000$ examples without label noise, we are able to achieve over $91.6 \%$ test accuracy on a balanced test set, suggesting that there is a meaningful difference between pairs generated by word replacement and the rest of the constraint candidates $C^e$.

\clearpage
\section{Further details on learning similarity functions}\label{appendix:active_appendix}
First, \cref{prop:similarity} below establishes that robustness with respect to a binary similarity function $\varphi$, i.e. $\varphi(s,s')=0 \Rightarrow f(s)=f(s')$, can fully capture the definition of individual fairness as Lipschitz-Continuity proposed by \citet{dwork2012fairness} for deterministic classifiers $f$.
\begin{proposition}\label{prop:similarity}
	Given a metric $d:X \times X \rightarrow \mathbb{R}$, a binary metric $d_{b}: Y \times Y \rightarrow \{0,1\}$ and a constant $L>0$, there exists a similarity function $\varphi:X \times X \rightarrow \{0,1\}$ such that a function $f:(X,d)\rightarrow (Y,d_{b})$ is Lipschitz-Continuous with constant $L$ if and only if  $\varphi(x,x')\geq d_{b}(f(x),f(x'))$ for all $x,x'\in X$.
\end{proposition}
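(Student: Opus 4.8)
The plan is to exhibit the required similarity function explicitly and then verify the two implications by a short case analysis that exploits the fact that both $d_b$ and $\varphi$ take values only in $\{0,1\}$. Recall that $f\colon(X,d)\to(Y,d_b)$ being Lipschitz-continuous with constant $L$ means $d_b(f(x),f(x'))\le L\,d(x,x')$ for all $x,x'\in X$. Since $d_b(f(x),f(x'))\in\{0,1\}$, this inequality is automatically satisfied whenever $L\,d(x,x')\ge 1$, i.e. whenever $d(x,x')\ge 1/L$, and it forces $d_b(f(x),f(x'))=0$ exactly when $d(x,x')<1/L$. This motivates defining
\[
\varphi(x,x'):=\begin{cases} 0 & \text{if } d(x,x')<1/L,\\ 1 & \text{if } d(x,x')\ge 1/L.\end{cases}
\]

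Next I would prove the forward direction: assuming $f$ is $L$-Lipschitz, fix $x,x'\in X$. If $\varphi(x,x')=1$ then trivially $\varphi(x,x')=1\ge d_b(f(x),f(x'))$ because $d_b\le 1$. If $\varphi(x,x')=0$ then $d(x,x')<1/L$, so $d_b(f(x),f(x'))\le L\,d(x,x')<1$, and since $d_b(f(x),f(x'))\in\{0,1\}$ it must equal $0=\varphi(x,x')$. For the converse, assume $\varphi(x,x')\ge d_b(f(x),f(x'))$ for all $x,x'$ and fix $x,x'$. If $d(x,x')\ge 1/L$ then $L\,d(x,x')\ge 1\ge d_b(f(x),f(x'))$. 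If $d(x,x')<1/L$ then $\varphi(x,x')=0$, hence $d_b(f(x),f(x'))\le\varphi(x,x')=0\le L\,d(x,x')$. In both cases $d_b(f(x),f(x'))\le L\,d(x,x')$, so $f$ is $L$-Lipschitz.

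The main obstacle here is essentially nonexistent: the only genuine idea is choosing the threshold $1/L$ in the definition of $\varphi$, after which both directions follow immediately from the $\{0,1\}$-valuedness of $d_b$ and $\varphi$. I would also remark, for completeness, that the argument never uses the metric axioms for $d$ nor for $d_b$ beyond $d_b$ being $\{0,1\}$-valued; it goes through verbatim for any nonnegative $d$ and any $\{0,1\}$-valued $d_b$, so in particular it applies to the binary ``disagreement'' metric $d_b(y,y')=\mathbbm{1}[y\neq y']$ used for classifiers in the paper.
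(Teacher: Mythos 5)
Your proposal is correct and uses essentially the same construction as the paper: the indicator $\varphi(x,x')=\mathbbm{1}\{Ld(x,x')\geq 1\}$ followed by a case analysis on the $\{0,1\}$-valued quantities. The only cosmetic difference is that the paper argues the converse by contraposition while you argue it directly; the content is identical.
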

\begin{proof}
	Define $\varphi(x,x'):=\mathbbm{1}\left\{Ld(x,x') \geq 1\right\}$. Then whenever $d_{b}(f(x),f(x'))=1$, we have $d_{b}(f(x),f(x'))=1\leq \varphi(x,x')$ if and only if $d_{b}(f(x),f(x'))\leq Ld(x,x')$. But if $d_{b}(f(x),f(x'))=0$, the Lipschitz inequality is allways true. Now, assume that $f$ is not Lipschitz: Then, there exist $x,x'\in X$ such that $1=d_{b}(f(x),f(x'))> Ld(x,x')$, implying $0=\varphi(x,x')< d_{b}(f(x),f(x')) =1$
\end{proof}

We use a BERT-based classifier that acts on a pair $(s,s')$ by first tokenizing both $s$ and $s'$ and padding the token representation to a length of $64$, concatenating these tokens and feeding the concatenated token representation into a pretrained bert-uncased-base model. We then apply a linear layer with dropout ($p=0.1$) followed by a Tanh layer and a second linear layer with dropout ($p=0.1$) to obtain single dimensional logits, to which a sigmoid layer is applied before computing the binary Cross Entropy loss.  We use BERT rather than more modern models such as RoBERTa \citep{liu2019roberta} and Deberta \citep{he2020deberta}, as we have found it to clearly outperform them for our task, plausibly because BERT uses a next-sentence-prediction task during pretraining, which is structurally similar to our task of comparing two sentences. \cref{table:architectures} demonstrates the advantage of using BERT, as well as concatenating token representations rather than learning based on the difference between separately produced BERT features for both $s$ and $s'$. Unless stated otherwise, our Active Learning approach trains for five epochs on each queried block $D_{i}$ before selecting new data $D_{i+1}$ to label.
\begin{table}[ht]
	\begin{center}
		\begin{tabular}{ @{}lcc }
			\toprule
			Model & BA \\
			\midrule
			BERT-Concat & \textbf{86.7} \\
			BERT-Merge & 79.9 \\
			BERT-Featurediff & 67.8 \\
			DeBERTa-Concat & 54.7 \\
			DeBERTa-Merge & 53.2 \\
			DeBERTa-Featurediff & 50.8 \\
			RoBERTa-Concat & 52.1 \\
			RoBERTa-Merge & 50.3 \\
			RoBERTa-Featurediff & 51.1 \\
			BERT-Large-Concat &84.4 \\
			BERT-Large-Merge & 84.1 \\
			BERT-Large-Featurediff & 59.2 \\
			BERT-Bilinear & 50.7\\
			\bottomrule
		\end{tabular}
	\end{center}
	\caption{Different architectures trained for one epoch on 5000 samples from a set of pairs $(s,s')$ generated using word replacement to distinguish demograpghic group transfer within the same category of gender and sexuality, race and religion vs across categories ($\varphi_{2}$). "Featurediff" uses a linear model applied to the difference of model features produced for the respective first tokens in $s$ and $s'$. "Bilinear" uses a bilinear model on top of these feature differences instead. "Merge" appends $s'$ to $s$ before tokenization and learns a linear model on top of the model features for this combined input. "Concat" operates similarly, but first tokenizes $s$ and $s'$ and pads both to $64$ tokens before feeding the concatenated tokens into the model. No dropout was used in the post-BERT layers for these experiments. All results averaged over 10 runs and $\pm$ indicates the difference from the upper/lower bound of a naive $95\%$ confidence interval assuming normally distributed errors. }\label{table:architectures}
\end{table}
Example generations for our different methods can be found in \cref{appendix:examples}.
\subsection{Synthetic Data}

For active learning, we freeze the underlying BERT model during the active learning selection and only apply MC-Dropout on the level of the classifier head, similar to \citep{griesshaber2020fine}, but unlike them we do not use BALD \citep{houlsby2011bayesian} and instead approximate $p(y|s,s')$ averaging the models' predicted probabilities $p_{\hat{\varphi}}(y|s,s',w)$ for $50$ sampled dropout masks $w$. We call this approach LC-UNC and experimented with various alternative selection criteria. Unlike LC-UNC, LC directly approximates $1-\max_{y}p(y|s,s')$ using a single forward pass through the $\hat{\varphi}$ with deactivated dropout. BALD is the approach from \cite{griesshaber2020fine}, while VARRA and Majority approximate $1-\max_{y}p(y|s,s')$ using MC-Dropout differently than LC-UNC: In Majority, $p(y|s,s')$ is approximated as the fraction of dropout samples $w$ for which $\hat{\varphi}=1$, while VARRA averages $1-\max_{y}p_{\hat{\varphi}}(y|s,s',w)$ over dropout samples $w$ instead of averaging $p_{\hat{\varphi}}(y|s,s',w)$ before applying the maximum operator. In addition, the table contains the "automatic relabeling" condition in which $D_{i}$ is selected from the whole of $C^e$ rather than just the previously unlabeled examples $D_{i}\subset C^e \setminus \bigcup_{j<i} D_{j}$. During training, pairs $(s,s')$ that have been queried multiple times are labelled according to the majority vote of all queries, and as $0.5$ in case of a tie.

We validate the efficacy of our active learning approach for learning the similarity function $\varphi(s,s')$ with a limited amount of noisy queries. For this, we define two synthetic similarity functions $\varphi_{i}: i\in \{1,2\}$. The first, $\varphi_{1}$ is equal to zero, whenever a pair $(s,s')$ was generated via word replacement and equal to one otherwise, as in the first experiment from the previous section. The second, $\varphi_{2}$ is equal to zero, whenever the group $j$ of $s$ that was removed and the added group $j'$ in $s'$ are within the same category of gender and sexuality, race, or religion, and equal to one otherwise. For example, a pair $(s,s')$ for which markers of "White people" in $s$ were modified to markers of "Black people" in $s'$ would have $\varphi_{2}(s,s')=0$, while $\varphi_{2}(s,s')$ would be one if the group was modified to "muslim" in $s'$ instead. We simulate the label noise introduced by annotators' disagreement by independently flipping each label with probability $p=0.3$ during training the similarity classifier $\hat{\varphi}$. For training with $3$ instead of one query per data point, we reduce the overall amount of training data from $10000$ samples in $C^e$ to $3333$ samples and reduce the probability of flipping labels to $p=0.216$, simulating a majority vote. In turn, the active learning approach selects $333$ instead of $1000$ data points for labeling in each of its ten steps in that scenario. \cref{table:AL_expand} shows that active learning noticeably outperforms randomly sampling data points for our task, that there is no clear direct benefit from employing multiple queries per pair $(s,s')\in C^e$ over obtaining labels for previously unseen pairs, an that the LC-UNC setup is usually performing as well as or better than alternative selection criteria in the one-query per data  point  setting.

\begin{table}[ht]
	\begin{center}
		\begin{tabular}{ @{}lcc@{}}
			\toprule
			Method/Dataset & $\varphi_{2}$ (Same category) &  $\varphi_{1}$ (Word replacement)   \\
			\midrule
			Random sampling, 1 query & $75.1 \pm 3.6$ & $74.8 \pm 1.8$ \\
			Random sampling, 3 queries  &  $71.6 \pm 3.9$ & $72.5 \pm 1.5$  \\
			Random sampling, 5 queries & $70.7 \pm 2.7$ & $73.4 \pm 1.8$   \\
			\midrule
			BALD 1 query & $75.9 \pm 4.0$  &  $77.9 \pm 2.1$ \\
			BALD 3 queries & $73.8 \pm 6.5$ & $78.1\pm 1.7$  \\
			BALD automatic relabeling & $76.1 \pm 4.5$& $77.6 \pm 2.6$ \\
			\midrule
			LC 1 query & $79.1 \pm 4.4$ & $78.5 \pm 1.8$  \\
			LC 3 queries & $74.6\pm 2.4$ & $79.5\pm 1.8$ \\
			LC automatic relabeling & $73.4 \pm 5.9$  & $78.2 \pm 1.3$  \\
			\midrule
			LC-UNC 1 query & $79.0 \pm 4.9$ & $79.7 \pm 1.5$   \\
			LC-UNC 3 queries & $75.8\pm 5.4$  & $78.7 \pm 2.6$  \\
			LC-UNC automatic relabeling & $76.6 \pm 3.9$ &  $76.7 \pm 1.5$ \\
			\midrule
			VARRA 1 query &  $77.3 \pm 7.4$ &  $78.9\pm 2.1$  \\
			VARRA 3 queries & $73.1 \pm 5.7$ & $79.8\pm 1.6$ \\
			VARRA automatic relabeling & $77.7\pm 2.9$  & $78.0\pm 1.3$ \\
			\midrule
			Majority 1 query & $74.9\pm3.5$  & $76.8 \pm 2.4$    \\
			Majority 3 queries & $78.7 \pm 5.2$  &  $79.6 \pm 0.9$\\
			Majority automatic relabeling & $74.4 \pm 6.2$  & $77.9 \pm 1.8$ \\
			\bottomrule
		\end{tabular}
	\end{center}
	\caption{Balanced accuracy for BERT classifier trained using a constant amount of 50k gradient steps and a constant amount of 10k queries. All results are averaged over 10 runs and $\pm$ indicates the difference from the upper/lower bound of a naive $95\%$ confidence interval assuming normally distributed errors.}\label{table:AL_expand}
\end{table}
\subsection{Human Evaluation}
\cref{table:AL sup T,table:AL sup S} show additional results on the active learning from human feedback. As above, we tested our approach using different filtering thresholds $t$ on the two test sets $T$ (\cref{table:AL sup T}) and $S$ (\cref{table:AL sup S}). In the Retrain condition, the classifier $\hat{\varphi}$ was trained for a single epoch on all labeled datapoints $\bigcup_{i<n} D_{i}$ in order to combat potential issues with catastrophic forgetting. In the Retrain + Reweigh condition, the same was done, but the Cross Entropy loss was reweighed to balance the empirical label frequencies in $\bigcup_{i<n} D_{i}$. In the From Scratch setting, we train a new classifier on $\bigcup_{i<n} D_{i}$ for $5$ epochs from scratch without first training it separately on any $D_{i}$. Again, datapoints are reweighed according to their empirical frequency in $\bigcup_{i<n} D_{i}$ in the From Scratch + Reweigh setting.
\begin{table}[ht]
\begin{center}
\begin{tabular}{ @{}lccc@{} }
\toprule
Method & ACC & TNR & TPR\\
\midrule
Baseline: Constant 0 & 78.8 & 100.0 & 0.0 \\
\midrule
AL t=0.5 & $ 79.8 \pm 0.3$  & $ 97.2 \pm 0.3 $ & $ 15.1 \pm 1.2 $ \\
AL + Relabel t=0.5 & $81.1 \pm 0.3$ & $95.5 \pm 0.7$ & $  28.6 \pm 2.2$ \\
AL + Relabel + Retrain t=0.5 & $ 79.6 \pm 0.4 $ & $ 95.3 \pm 1.4 $ & $ 21.5 \pm 3.9 $ \\
AL + Relabel + Retrain + Reweigh t=0.5 & $ 79.6 \pm 0.8 $ & $ 93.9 \pm 1.6 $ & $ 26.6 \pm 3.4 $ \\
From Scratch t=0.5 & $77.5 \pm 1.3$ & $90.8 \pm 3.3 $ & $28.1 \pm  7.1$ \\
From Scratch + Reweigh t=0.5 & $ 77.7 \pm 1.4$ & $ 91.0 \pm 2.7 $ & $ 28.3 \pm 5.0 $ \\
\midrule
AL t=0.1 & $ 80.0 \pm 0.5 $  & $ 95.2 \pm 0.7$ & $ 23.7 \pm 3.5  $\\
AL + Relabel t=0.1 &$ 80.7 \pm 0.6 $ & $ 93.0 \pm 0.9 $ & $ 35.0 \pm 1.3 $ \\
AL + Relabel + Retrain t=0.1 & $62.1 \pm 5.6 $ & $61.5\pm 8.9 $ & $ 64.0 \pm 7.0$ \\
AL + Relabeling + Retrain + Reweigh t=0.1 & $52.8 \pm 6.2 $ & $ 46.8 \pm 7.7$ & $75.0 \pm 4.6 $ \\
From Scratch t=0.1 & $53.4 \pm 7.9$ & $48.6 \pm 14.3$ & $71.1 \pm 9.2$ \\
From Scratch + Reweighed t=0.1 & $54.8\pm 6.7$ & $51.2 \pm 10.5$ & $67.9 \pm 9.1 $ \\
\midrule
AL t=0.01 &$78.7 \pm 1.1 $ & $87.5 \pm 2.1$ & $45,7 \pm 1.8$  \\
AL + Relabel t=0.01 & $78.3 \pm 0.7$ &$86.8\pm 1.5$ & $46.6\pm 2.5$ \\
AL + Relabel + Retrain t=0.01 & $21.2 \pm 0.1$ & $0.0\pm 0.0$ & $100 \pm 0.0 $\\
AL + Relabel + Retrain + Reweigh t=0.01 & $21.1 \pm 0.0 $ & $ 0.0 \pm 0.0 $ & $ 100 \pm 0.0 $ \\
From Scratch t=0.01 &$21.7 \pm 0.5$ & $0.0 \pm 0.0$ & $99.5 \pm 0.6$ \\
From Scratch + Reweigh t=0.01 & $  21.8 \pm 1.5 $ & $ 1.5 \pm 3.6 $ & $ 98.3 \pm 1.7$ \\
\bottomrule
\end{tabular}
\end{center}
\caption{Results for active learning to predict human fairness judgments, on test data $T$. Active learning classifiers are retrained $10$ times on the last batch $D_{6}$. Results are averaged and $\pm$ indicates the difference from the upper/lower bound of a naive $95\%$ confidence interval assuming normally distributed errors.  }\label{table:AL sup T}
\end{table}

\begin{table}[ht]
\begin{center}
\begin{tabular}{ @{}lccc@{} }
\toprule
Method & ACC & TNR & TPR\\
\midrule
Baseline: Constant 0 & $96.1$ & $100.0$ & $0.0$ \\
\midrule
AL t=0.5 & $ 93.8 \pm 0.5 $ & $ 97.0 \pm 0.6 $ & $ 14.6 \pm 2.2$ \\
AL + Relabel t=0.5 & $92.1 \pm 0.6$ & $95.1 \pm 0.7$ & $18.9 \pm 2.7$ \\
AL + Relabel + Retrain t=0.5 & $ 90.7 \pm 1.7 $ & $93.8 \pm 1.9$ & $12.8 \pm 4.0 $ \\
AL + Relabel + Retrain + Reweigh t=0.5 & $ 89.0 \pm 1.3 $ & $ 92.0 \pm 1.4 $ & $16.4 \pm 3.4$  \\
From Scratch t=0.5 & $89.2 \pm 2.6$ & $91.8\pm 2.5$ & $25.7\pm 5.5$ \\
From Scratch + Reweigh t=0.5 & $ 89.2 \pm 2.5 $ & $ 91.8 \pm 2.7 $ & $25.7\pm 4.4 $ \\
\midrule
AL t=0.1 & $ 90.4 \pm 1.3 $ & $ 93.3 \pm 1.3 $ & $ 21.0 \pm 2.3 $ \\
AL + Relabel t=0.1 &$ 89.6 \pm 0.8 $ & $ 92.2 \pm 0.8 $ & $ 24.6 \pm 1.4 $\\
AL + Relabel + Retrain t=0.1 & $60.0 \pm 8.1$ & $59.5 \pm 8.8$ & $ 72.8 \pm 11.9 $\\
AL + Relabel + Retrain + Reweigh t=0.1 & $46.7  \pm 7.4 $ & $ 45.2 \pm 8.0$ & $ 83.9 \pm 7.6$ \\
From Scratch t=0.1 & $50.6 \pm 10.4$ & $49.8 \pm 11.2$ & $69.6 \pm 9.3$ \\
From Scratch + Reweigh t=0.1 & $55.0 \pm 9.4 $ & $ 54.5 \pm 10.0$ & $ 66.7 \pm 6.6$ \\
\midrule
AL t=0.01 & $80.6 \pm 2.3$ &$82.3 \pm 2.7$ & $38.2 \pm 6.8$\\
AL + Relabel t=0.01 & $80.2 \pm 1.3$& $85.5\pm 1.4$ & $30.0\pm 2.7$ \\
AL + Relabel + Retrain t=0.01 & $3.9 \pm 0.0 $ & $ 0.0 \pm 0.0$ & $ 100.0 \pm 0.0 $ \\
AL + Relabel + Retrain + Reweigh t=0.01 & $ 3.9 \pm 0.0 $ & $ 0.0 \pm 0.0 $ & $ 100.0 \pm 0.0 $ \\
From Scratch t=0.01 & $4.6 \pm 0.9$ & $0.0 \pm 0.1$ & $99.6 \pm 0.4$ \\
From Scratch + Reweigh t=0.01 & $ 5.4 \pm 3.9 $ & $ 1.6 \pm 3.2 $ & $ 50.8 \pm 1.6$ \\
\bottomrule
\end{tabular}
\end{center}
\caption{Results for active learning to predict human fairness judgments, using the separate test data $S$.  Active learning classifiers are retrained $10$ times on the last batch $D_{6}$. Results are averaged and $\pm$ indicates the difference from the upper/lower bound of a naive $95\%$ confidence interval assuming normally distributed errors. }\label{table:AL sup S}
\end{table}

\clearpage
\section{Further Details on Training Downstream classifiers}\label{appendix:filtering_appendix}
The downstream classifier $f$ consists of a pretrained roberta-base model followed by a linear layer with $768$ neurons applied to the output embedding of the first token, a Tanh layer, another linear layer mapping to a single dimension, and a Sigmoid layer. We train $f$ using binary Cross Entropy reweighed to balance the empirical label frequencies in $D$ for $3$ epochs using a batch size of $32$ and the Adam optimizer with a learning rate of $0.00001$.

\cref{table:robtrans_long} extends \cref{table:robtrans} and shows that censoring words yields very strong constraint adherence for the respective word list \footnote{Artifacts like word replacement lists that contain both a word $s$ and substrings of $s$ keep this below 100\%}. However, we find it to generalize worse than CLP trained with the same word list, both to our style transfer pairs, and even to the respective other word list. Similarly, we find that training with CLP on $C^e$ or our style transfer pairs $C_{3}$ does not just yield significantly improved constraint adherence on $C_{3}$, but also generalizes better to $C_{1}$ and $C_{2}$ than the respective other of the two word replacement constraint sets without losing much downstream accuracy. Lastly, the table also shows that the better generalization from style transfer to word replacement persists for large values of $\lambda$ in CLP and that these values can provide strong improvements in terms of fairness, albeit at a larger cost in terms of balanced accuracy.

\begin{table}[ht]
	\begin{center}
		\begin{tabular}{@{}lccccc@{}}
			\toprule
			Training/Evaluation &  BA & WR$_{50}$ ($C_{1}$) & WR ($C_{2}$)  & ST ($C_{3}$) & Full $C^e$ \\
			\midrule
			Baseline & $ 88.4 \pm 0.1  $& $ 78.4 \pm 1.4 $ & $ 81.3 \pm 1.5 $  & $ 76.7 \pm 1.8 $  &  $ 78.5 \pm 1.5$  \\
			\midrule
			Censoring WR$_{50}$& $ 87.0 \pm 0.3  $& $ 99.8 \pm 0.0$ & $ 88.4 \pm 1.2 $ & $84.7 \pm 1.1$  &  $ 85.9 \pm 1.2 $   \\
			Censoring WR & $ 86.1 \pm 0.4  $& $ 91.4 \pm 1.2$ & $  99.3 \pm 0.2 $  & $89.0 \pm 1.5$  &  $ 92.8 \pm 1.0 $  \\
			Censoring Both WR & $ 86.2 \pm 0.3  $& $99.7 \pm 0.2 $ & $ 99.1 \pm 0.1 $ & $89.3 \pm 0.4$  &  $ 92.8 \pm 0.3 $   \\
			\midrule
			CLP($\lambda=$$5$) WR$_{50}$($C_{1}$)  & $ 87.0 \pm 0.3 $& $ 98.3 \pm 0.1 $ & $ 89.1 \pm 1.9 $& $86.3 \pm 1.9$  &  $ 87.3 \pm 1.8 $  \\
			CLP($\lambda=$$5$) WR ($C_{2}$) & $ 87.2 \pm 0.1 $& $ 93.1\pm 1.2$ & $98.2\pm 0.4$ & $90.5 \pm 1.7 $ &  $92.9\pm 1.2$    \\
			CLP($\lambda=$$5$) ST ($C_{3}$) & $ 85.9 \pm 0.1 $& $ 95.3\pm 0.4$ & $97.1 \pm 0.3$ & $95.4 \pm 0.4$ &  $95.5\pm 0.3$    \\
			CLP($\lambda=$$5$)  Full $C^e$  & $ 85.0 \pm 3.4 $&  $95.5 \pm 0.9 $ & $ 97.8 \pm 0.6 $ & $94.9 \pm 0.9$ & $ 95.7 \pm 0.8 $  \\
			\midrule
			CLP($\lambda=$$125$) WR$_{50}$($C_{1}$) & $ 82.5 \pm 1.3 $& $ 98.3 \pm 0.6 $ & $ 94.3 \pm 0.8  $ & $90.9 \pm 1.1$ &  $ 92.1 \pm 0.9 $  \\
			CLP($\lambda=$$125$) WR ($C_{2}$)& $ 81.8 \pm 1.5 $& $ 95.9 \pm 2.2 $ & $98.6\pm 0.5$ & $92.5\pm 2.2$  & $94.7 \pm 1.5$    \\
			CLP($\lambda=$$125$) ST ($C_{3}$) & $ 80.3 \pm 2.8 $& $ 97.6 \pm 0.8 $ & $98.4\pm 0.6$  &  $97.2\pm 0.9$ &  $97.2 \pm 0.9$    \\
			CLP($\lambda=$$125$) Full $C^e$ & $ 79.3 \pm 6.1 $&  $ 97.8 \pm 1.3 $ & $ 98.6 \pm 0.9   $ & $97.1 \pm 1.6$ & $ 97.4 \pm 1.4 $  \\
			\bottomrule
		\end{tabular}
	\end{center}
	\caption{Balanced accuracy and individual fairness (proportion of similar pairs $(s,s')\in C_{i}$ for which $f(s)=f(s')$) for a Roberta-based classifier $f$ trained with CLP using different constraint sets for training. Results reported with $\pm$ are averaged over 5 runs and $\pm$ indicates the difference from the upper/lower bound of a naive $95\%$ confidence interval assuming normally distributed errors. }\label{table:robtrans_long}
\end{table}

\subsection{Experiments with filtering on synthetic data}
The filtering process for CLP is implemented as follows: for each batch $B$ of labeled training examples $(s,y(s))$ used to train a downstream classifier $f$, we evaluate $p_{\hat{\varphi}}(s,s')$ for all $(s,s')\in C^e$ with $s\in B$. Then, for every $s\in B$ we randomly select a pair $(s,s')$ among the pairs with $p_{\hat{\varphi}}(s,s')>t$ for a filtering threshold $t$ to use in the CLP regularizer $\lambda ||l(s)-l(s')||_{2}$ with $l$ representing the logits of the downstream classifier $f$, using $(s,s)$ if no such pair exists.
To allow for more precise control over the statistical properties of $\hat{\varphi}$, we constructed additional $\hat{\varphi}$ using a look-up table using $\varphi_{i}$ and flip the labels of randomly selected pairs $(s,s')$ with either $\varphi_{i}(s,s')=1$ or $\varphi_{i}(s,s')=0$ in order to achieve specific True positive rates (TPR) and true negative rates (TNR).
\cref{table:RobWR_extend} shows that there are consistent benefits from filtering for the synthetic similarity function $\varphi_{1}$ from \cref{appendix:active_appendix} across different values of $\lambda$, even when an imperfect $\hat{\varphi}$ with a TPR and TNR of $75\%$ is used.

\begin{table}[ht]
	\begin{center}
		\begin{tabular}{ @{}lcc@{}}
			\toprule
			Method & Balanced Accuracy & Fairness \\
			\midrule
			Baseline & $\textit{88.2}\pm 0.4$ &  $\textit{82.0} \pm 2.2$ \\
			Full $C^e$ $\lambda = 5.0$ & $85.6 \pm 0.4$ & $98.2 \pm 0.2$   \\
			Full $C^e$ $\lambda = 125.0$ & $73.9 \pm 16.8$ & $98.6 \pm 1.0$  \\
			Filtering with 75$\%$ TNR/TPR, $\lambda = 5.0$  & $86.3 \pm 0.6$ &  $97.9 \pm 0.3$  \\
			Filtering with 75$\%$ TNR/TPR, $\lambda = 125.0$  & $77.2 \pm 6.1$ &  $99.1 \pm 0.3$  \\
			Perfect filtering $\lambda = 5.0$  & $\textbf{87.5} \pm 0.1$ &  $98.2 \pm 0.2$  \\
			Perfect filtering $\lambda = 125.0$ & $86.1 \pm 0.4$  & $\textbf{99.3} \pm 0.1$  \\
			\bottomrule
		\end{tabular}
	\end{center}
	\caption{Balanced accuracy and individual fairness (proportion of similar pairs $(s,s')$ according to $\varphi_{1}$ for which $f(s)=f(s')$) CLP training after filtering $C^e$ using approximations of $\varphi_{1}$ with varying error profiles. All results are averaged over 5 runs and $\pm$ indicates the difference from the upper/lower bound of a naive $95\%$ confidence interval assuming normally distributed errors. }\label{table:RobWR_extend}
\end{table}
\ref{table:RobAxis} shows that unlike for $\varphi_{1}$ (\cref{table:RobWR_extend}), there is little gain from filtering constraints for $\varphi_{2}$, most likely because some of the constraint candidates generated by GPT-3 and our style transfer approach are difficult to enforce while maintaining high level of accuracy. While all of these constraints are inactive for $\varphi_{1}$ and are therefore not enforced with sufficiently accurate filtering, many of them remain active with $\varphi_{2}$ such that filtering yields no clear benefits.
\begin{table}[ht]
	\begin{center}
		\begin{tabular}{ @{}lcc@{}}
			\toprule
			Method & Balanced Accuracy & Fairness \\
			\midrule
			Baseline & $\textit{87.9}\pm 1.2$ & $\textit{76.5} \pm 1.5$\\
			Full $C^e$ $\lambda = 5.0$ & $85.6 \pm 0.4$ & $96.6 \pm 0.4$   \\
			Full $C^e$ $\lambda = 125.0$ & $78.9 \pm 2.7$ & $\textbf{97.5} \pm 1.2$  \\
			Perfect filtering $\lambda = 5.0$  & $\textbf{86.6} \pm 0.3$ &  $95.7 \pm 0.6$  \\
			Perfect filtering $\lambda = 125.0$ & $80.7 \pm 2.2$  & $97.3 \pm 0.6$  \\
			\bottomrule
		\end{tabular}
	\end{center}
	\caption{Balanced accuracy and indvidual fairness (proportion of similar pairs $(s,s')$ according to $\varphi_{2}$ for which $f(s)=f(s')$) for CLP training after filtering $C^e$ using approximations of $\varphi_{2}$ with varying error profiles. All results are averaged over 5 runs and $\pm$ indicates the difference from the upper/lower bound of a naive $95\%$ confidence interval assuming normally distributed errors. }\label{table:RobAxis}
\end{table}
\subsection{Experiments with filtering on human fairness judgments}

\cref{table:endresults_full} is an extended version of \cref{table:endresults} including additional experiments with a larger regularization parameter $\lambda=125$. Again, there is no visible benefit from filtering. Counterintuitively, more filtering appears to correspond to less accuracy but slightly more fairness, but this might be by chance, given the significantly larger error bars for $\lambda=125$.

\begin{table}[ht]
	\begin{center}
		\begin{tabular}{ @{}lcccc@{} }
			\toprule
			Method & BA & NR & Fairness ($T$) & Fairness ($S$) \\
			\midrule
			Baseline & $ 88.2 \pm 0.4 $ & $0.0$ & $ 82.1 \pm 2.1 $ & $ 84.7 \pm 1.3 $  \\
			\midrule
			WR (Garg)  $\lambda=5$& $  87.1 \pm 2.0  $ & $100$ & $ 92.8 \pm 0.9  $ & $ 95.2 \pm 0.8 $ \\
			WR  $\lambda=5$& $ 87.2 \pm 0.2 $ & $100$ &$ 95.8 \pm 0.9$ & $ 95.8 \pm 1.2 $  \\
			Full constraint set $C^e$ $\lambda=5$&  $85.9 \pm 0.3$ & $100$ &$96.5 \pm 1.4$ & $97.0 \pm 1.5$  \\
			Filtering with threshold 0.5 $\lambda = 5$ & $85.9 \pm 0.5$ & $88.5 \pm 1.0 $& $97.4\pm 1.1$ & $97.1 \pm 1.1$\\
			Filtering with threshold 0.1 $\lambda = 5$ & $86.1 \pm 0.1$ & $84.6 \pm 1.4$ &$97.2 \pm 0.6$ & $96.6 \pm 0.6$ \\
			Filtering with threshold 0.01 $\lambda = 5$ & $85.9 \pm 0.2$ &$76.9 \pm 2.0$  &$97.1 \pm 1.0$ & $96.9 \pm 1.1$ \\
			\midrule
			WR (Garg) $\lambda=125$& $  81.6 \pm 0.6  $ &$100$& $ 95.6 \pm 1.7  $ & $ 96.8 \pm 0.2 $ \\
			WR $\lambda=125$& $  81.2 \pm 2.7  $ &$100$& $ 97.4 \pm 2.5  $ & $ 97.5 \pm 0.1 $ \\
			Full constraint set $C^e$  $\lambda=125$&  $ 81.8 \pm 2.1$ &$100$& $98.0 \pm 0.6$ & $97.6 \pm 1.1$  \\
			Filtering with threshold 0.5 $\lambda = 125$ & $81.3 \pm 1.5$ & $88.5 \pm 1.0 $ &$98.1\pm 0.9$ & $98.0 \pm $ 0.7\\
			Filtering with threshold 0.1 $\lambda = 125$ & $80.7 \pm 1.9$ &$84.6 \pm 1.4$&  $97.8 \pm 1.3$ & $98.2 \pm 0.6$ \\
			Filtering with threshold 0.01 $\lambda = 125$ & $80.7 \pm 2.2$ & $76.9 \pm 2.0$ & $98.6 \pm  0.5$ & $97.8 \pm  0.9$\\
			\toprule
		\end{tabular}
	\end{center}
	\caption{Balanced accuracy, percent of unfiltered pairs, and individual fairness on two different test sets for differently trained classifiers $f$. Fairness is measured as the fraction of pairs $(s,s')$ with $\varphi(s,s')=0$ according to the majority of annotators, for which $f(s)=f(s')$. Active learning classifiers are retrained $5$ times on the last batch $D_{6}$ and results are averaged over 5 training runs. $\pm$ indicates the difference from the upper/lower bound of a naive $95\%$ confidence interval assuming normally distributed errors.}\label{table:endresults_full}
\end{table}

\clearpage
\section{Word Lists And Example Generations}\label{appendix:examples}
\cref{table:sttable,table:gpttable,table:edittable} show 5 randomly selected example pairs $(s,s')$ produced by our style transfer approach and GPT-3 in zero-shot and edit mode. \cref{table:gencomp} shows 5 generated comments for each method using the same source comment and target demographic attribute.

 \textcolor{red}{Warning: Some of the example texts contain offensive language.}
\begin{table}
	\begin{center}
		\begin{tabular}{ |p{0.2\linewidth}|p{0.35\linewidth}|p{0.35\linewidth}| }
			\hline
			Demographic Group & Descriptors & Nouns \\
			\hline
			Male  & male,manly,masculine & man, men, grandfather, bro, guy, boy, father, dad,
			son, husbands, husband, grandpa, brother \\
			\hline
			Female  & female, pregnant, feminine, femme, womanly & woman, women, grandmother, lady, ladies, girl, mother, mom,
			daughter, wives, wife, grandma, sister \\
			\hline
			Transgender & transsexual, FTM, F2M, MTF, trans,  M2F, transgender, trans female, trans feminine,
			trans femme,trans fem, trans male,trans masculine,trans masc & \\
			\hline
			Heterosexual & straight, heterosexual, hetero & \\
			\hline
			Homosexual & gay, homosexual, lesbian & \\
			\hline
			Christian & Amish, Catholic, Christian,  Evangelical, Lutheran, Mormon, Protestant,
			Unitarian & \\
			\hline
			Jewish & Conservative Jewish, Orthodox Jewish,  Reform Jewish, Hasidic, Jewish & \\
			\hline
			Muslim & Muslim, Shia, Sunni & \\
			\hline
			Hindu & Hindu & \\
			\hline
			Buddhist & Buddhist & \\
			\hline
			Atheist & atheist, irreligious & \\
			\hline
			Black & African-American, African, Black, BIPOC & \\
			\hline
			White & European-American, European, Caucasian, White & \\
			\hline
			Asian & Chinese-American, American-Born Chinese,  Chinese, Indian-American, Indian,
			Korean-American,
			Korean, Vietnamese-American,  Vietnamese, AAPI & \\
			\hline
			Latino & Cuban-American, Cuban, Dominican-American,  Dominican, Salvadoran-American, Salvadoran, Guatemalan-American, Guatemalan, Mexican-American, Mexican, Filipina-American, Filipina, Filipino-American, Filipino, Hispanic, Latinx, Latine, Latino, Latina, Latin American & \\
			\hline
		\end{tabular}
	\end{center}
	\caption{Lists of words used for word replacementm, based on \citet{smith2022m}}\label{table:WR}
\end{table}

\begin{table}[ht]
	\begin{center}
		\begin{tabular}{ |p{0.45\linewidth}|p{0.45\linewidth}| }
			\hline
			Original Comment & Modified Comment \\
			\hline
			'Hahaha, really?  Thousands of years of war and it was mostly white men, huh?  Read some history about non-European cultures to pierce your idyllic bubble.' & 'Hahaha, really?  Thousands of years of war and it was mostly white European Christians, huh?  Read some history about non-European cultures to pierce your idyllic bubble'\\
			\hline
			'I would really like to see your source for the Japanese fear of the Russians as being any kind of factor whatsoever. For most of the Japanese people, the Russians were not even on the radar screen!!' & 'I would really like to see your source for the Buddhist fear of the west as being any kind of factor whatsoever. For most of the western people, the west and religion were not even on the radar screen!!' \\
			\hline
			'Yes: at least implicitly it endorses the old canard that the United States is a "Christian nation."' &
			'Yes: at least implicitly it endorses the old canard that the United States is a "white racist nation."'\\
			\hline
			'Please be careful what you label fake news. News of shootings (Greenville?), migrants in Sweden raising crime rates, Hispanics being rapists and murderers, illegal voting, etc those are examples of fake news....feet on couch? not so much....' &
			'Please be careful what you label fake news. Hundreds of shootings (Green Mountain?), migrants in Sweden raising crime rates, homosexuals being raped and murderers, illegal voting, etc those are examples of fake news....feet on couch? not so much.'
			\\
			\hline
			'And often they are not so silent.  Gay friends have told me about various such clergy who are well known -- only no one speaks out about it.' & 'And often they are not so silent.  Gay friends have told me about various such men who are well known -- only no one speaks out about it.' \\
			\hline
		\end{tabular}
	\end{center}
	\caption{Examples of pairs produced by our style transfer approach}\label{table:sttable}
\end{table}

\begin{table}[ht]
	\begin{center}
		\begin{tabular}{ |p{0.45\linewidth}|p{0.45\linewidth}| }
			\hline
			Original Comment & Modified Comment \\
			\hline
			'OK, so now I\'m confused.  Is it or is it not OK to mock a reporter\'s physical handicap?  Call a Latina beauty queen "Miss Housekeeping"?  Post a mock video showing yourself clocking a woman you don\'t like with a hard-hit golf ball?'
			&
			'  Is it or is it not OK to mock a reporter\'s physical handicap? Call a Muslim beauty queen "Miss Housekeeping"? Post a mock video showing yourself clocking a woman you don\'t like with a hard-hit golf ball?'\\
			\hline
			"You caught me. I confess. I've been lying this whole time. I'm not actually an atheist, of course I believe in a god. I see him in the mirror every day !"&
			"  You caught me. I confess. I've been lying this whole time. I'm not actually a homosexual, of course I believe in a god. I see him in the mirror every day !"\\
			\hline
			"Denying the existence of racism in America,(or most anywhere), is just ignoring reality.    The formal dicrimination has largely been curbed(with exception of using race to design Congressional districts), but those wonderful 'family values' of the white South still carry the mythology of inherent white superiority" &
			".  Denying the existence of racism in America,(or most anywhere), is just ignoring reality.    The formal dicrimination has largely been curbed(with exception of using race to design Congressional districts), but those wonderful 'family values' of the Hindu South still carry the mythology of inherent Hindu" \\
			\hline
			'Cite to "the Republicans, Trump being a Republican, who are against gays" please.  New one to me. (Other than Trump being registered Republican of course, lol!)' &
			'  Christian people are against gays.'\\
			\hline
			"You'd better watch out if you don't claim Christianity unto yourself like they do. It is true that people harbor resentment over people whom they view as different.  It's the only reason I can think of why they hate gay people." &
			"  You'd better watch out if you don't claim Christianity unto yourself like they do. It is true that people harbor resentment over people whom they view as different. It's the only reason I can think of why they hate Asians."
			\\
			\hline
		\end{tabular}
	\end{center}
	\caption{Examples of pairs produced by GPT-3 (zero-shot)}\label{table:gpttable}
\end{table}

\begin{table}[ht]
	\begin{center}
		\begin{tabular}{ |p{0.45\linewidth}|p{0.45\linewidth}| }
			\hline
			Original Comment & Modified Comment \\
			\hline  'So do they cover their faces?  This is about being able to identify women for practical purposes.  Makes sense to moi.' &
			'So do they cover their faces?  This is about being able to identify White people for practical purposes.  Makes sense to moi. '
			\\
			\hline 'What about LGBTQI+M(xn) W(omxn)Y(oxth)C(hxldrxn)O(thxrs)' &
			'Christianity is the best religion. Christian people are the best people. Jesus Christ was a man who lived a perfect life and died for our sins. We should all be following Jesus because he is the author and finisher of our faith. '  \\
			\hline
			'Bust Judaism, Christianity and Islam are all Abrahamic religions tracing their origins to the patriarch Abraham.  They are all related.' &
			'Bust Hispanic people are all the same. They are all related. '
			\\
			\hline
			"AO96 doesn't allow men into womens rest rooms or locker rooms. You do realize that trans people have been using the bathroom with you forever right? Have you ever been injured by any of these people in all the years you have been using public facilities?"&
			"AO96 doesn't allow white people into black rest rooms or locker rooms. You do realize that trans people have been using the bathroom with you forever right? Have you ever been injured by any of these people in all the years you have been using public facilities? "\\
			\hline
			'so.. I . have seen dozens of women in niqab in Toronto this week alone Almost every day I see groups of them. The problem with the niqab policy is that is was for such a silly narrow issue. The niaqb should have been banned from public buildings and shopping malls.' &
			'so.. I . have seen dozens of black people in the niqab in Toronto this week alone Almost every day I see groups of them. The problem with the niqab policy is that is was for such a silly narrow issue. The niaqb should have been banned from public buildings and shopping malls. ' \\
			\hline
		\end{tabular}
	\end{center}
	\caption{Examples of pairs produced by GPT-3 (edit mode)}\label{table:edittable}
\end{table}

\begin{table}[ht]
	\begin{center}
		\begin{tabular}{ |p{0.22\linewidth}|p{0.22\linewidth}| p{0.22\linewidth}| p{0.22\linewidth}| }
			\hline
			Original Comment & Modified Comment (Style Transfer) & Modified Comment (GPT-3 zero shot & Modified Comment (GPT-3 edit mode)   \\
						\hline
			I agree with those many trans-gendered. & I agree with those many pre-gospel Catholics. & I agree with those many Christians. & I agree with those many Christian people.  \\
					   \hline
			Everyone in plain black pajamas might do the trick. & Everyone in plain white pajamas might do the trick &  All white people in plain white pajamas might do the trick. & Everyone in plain white pajamas might do the trick. \\
		   \hline
		   You do know Mexicans are American ...right? & You do know gays are gay...right? &  You do know homosexuals are American ...right? & You do know Homosexuals are people too ...right? . \\
		   \hline
		   I'm an atheist and I don't care about obnoxious atheists. Either way. &  I'm an Indian and I don't care about obnoxious Muslims. Either way. &    I don't care about obnoxious Hindus. & I'm a Hindu and I don't care about obnoxious Hindus. Either way.  \\
		   \hline
		   Another condescending ignorant Hindu nationalist ignorant of the genocidal, racist and murderous past of its culture. & Another condescending ignorant white nationalist ignorant of the genocidal, racist and murderous past of its culture &  Another condescending ignorant white nationalist ignorant of the genocidal, racist and murderous past of its culture. & A White person. \\ \hline
		\end{tabular}
	\end{center}
	\caption{Examples of pairs produced by different methods for the same source comment and target demographic attribute.}\label{table:gencomp}
\end{table}

\clearpage
\section{Group Fairness}
\cref{table:gfresults} provides results on equality of odds for a
subset of the models trained for the experiment presented in
\cref{table:endresults_full}. Specifically, for every group $j$
that was considered for generation, we calculated the TPR and TNR on
the test set restricted to comments $s$ mentioning that group
(label $y_j(s)$=1). We then calculated the absolute pairwise
differences between the TPRs/TNRs for every pair of groups (in percent),
and present the mean and maximum over these for each model.
We find that CLP-training improves the TNR gap at the cost of a worse
TPR gap, except for pairs generated by the Word Replacement list of
\citet{garg2019counterfactual}.  This effect is most extreme when using
our full constraint set $C^e$ and more mellow for Word Replacement based
on the word list of \citet{smith2022m} or for the filtered $\hat{C}^r$.
The improved TNR gap at the cost of a worse TPR gap is directionally
consistent with the results reported by \citet{garg2019counterfactual} for
CLP training using their word list, with groups defined by the presence
of specific identity terms from the word list rather than the labels
$y_j(s)$. We echo \citet{garg2019counterfactual}'s recommendation for
practicioners to select a method depending on their relative prioritization
between improving individual fairness and equitable True Negative Rates,
compared to equitable True Positive Rates.
\begin{table}[ht]
	\begin{adjustbox}{width=\textwidth,center}
		\begin{tabular}{ @{}lcccc@{} }
			\toprule
			Method & TPR Gap (Mean) & TNR Gap (Mean)  & TPR Gap (Max) & TNR Gap (Max) \\
			\midrule
			Baseline & $ 5.8 $ & $20.9$ & $ 13.6 $ & $ 54.3 $  \\
			\midrule
			WR (Garg)  $\lambda=5$& $  5.4  $ & $20.3$ & $ 13.6  $ & $ 51.8$ \\
			WR  $\lambda=5$& $ 11.2 $ & $10.8$ &$ 34.7$ & $ 30.1 $  \\
			Full constraint set $C^e$ $\lambda=5$&  $15.7$ & $6.1$ &$56.3$ & $22.0$  \\
			Filtering with $t=0.5$ $\lambda = 5$ & $11.1$ & $10.0$& $34.5$ & $29.3$\\
			\bottomrule
		\end{tabular}
	\end{adjustbox}
	\caption{Mean and Maximum of absolute pairwise TPR and TNR gaps (Absolute differences between TPR, in percent) between comments mentioning different demographic groups for differently trained classifiers $f$. }\label{table:gfresults}
\end{table}

}{}

\message{^^JLASTPAGE \thepage^^J}

\end{document}